\newcommand{\lkakko}{\ensuremath{\{ }}
\newcommand{\rkakko}{\ensuremath{\}}}
\newcommand{\Maxi}{\textsf{Max}}
\newcommand{\Pref}{\textsf{Pref}}
\newcommand\myarrow{\stackrel{\mathclap{{\tiny \normalfont\mbox{alert}}}}{\leftarrow}}
\newcommand{\Rmax}{\ensuremath{V^{\text{max}}}}
\newcommand{\hide}[1]{}
\newcommand{\staritem}{\global\asterisktrue\item}
\newcommand{\perhapsstar}{
    \ifasterisk$\star$\global\asteriskfalse\fi
}
\newif\ifasterisk
\newcommand{\view}{\ensuremath{\textsf{View}_R}}
\newcommand{\Delete}{\textsf{Del}}
\newcommand{\Attacker}{\textsf{Attacker}}
\newcommand{\cons}[2]{\ensuremath{#1 \leftarrow
        #2}}
\newcommand{\badcons}[2]{\ensuremath{#1 \nleftarrow
        #2}}
\newcommand{\alertcons}[2]{\ensuremath{#1 \myarrow
        #2}}
\newcommand{\Sentence}{\textsf{S}}
\newcommand{\CaSpare}{\textsf{Kspare}}
\newcommand{\Dob}{\textsf{Dob}}
\newcommand{\Car}{\textsf{Car}}
\newcommand{\Bb}{\textsf{Bb}}
\newcommand{\Tz}{\textsf{Tz}}
\newcommand{\Ca}{\textsf{Ca}}
\newcommand{\Dig}{\textsf{Dig}}
\newcommand{\ARB}{\textsf{ARB}}
\newcommand{\ACE}{\textsf{ACE}}
\newcommand{\Dr}{\textsf{Dr}}
\newcommand{\Dih}{\textsf{Dih}}
\newcommand{\Md}{\textsf{Md}}
\newcommand{\Hr}{\textsf{Hr}}
\newcommand{\La}{\textsf{La}}
\newcommand{\Nif}{\textsf{Nif}}
\newcommand{\hpot}{\textsf{hpot}}
\newcommand{\hpom}{\textsf{hpom}}
\newcommand{\hpl}{\textsf{hpl}}
\newcommand{\igt}{\textsf{igt}}
\newcommand{\hf}{\textsf{hf}}
\newcommand{\hpt}{\textsf{ht}}
\newcommand{\asth}{\textsf{asth}}
\newcommand{\bc}{\textsf{bc}}
\newcommand{\lve}{\textsf{lve}}
\newcommand{\tac}{\textsf{tac}}
\newcommand{\pad}{\textsf{pad}}
\newcommand{\anp}{\textsf{anp}}
\newcommand{\gt}{\textsf{gt}}
\newcommand{\asp}{\textsf{asp}}
\newcommand{\postMi}{\textsf{post-mi}}
\newcommand{\ckdplus}{\textsf{ckd}+}
\newcommand{\ckdminus}{\textsf{ckd}-}
\newcommand{\dm}{\textsf{dm}}
\newcommand{\ms}{\textsf{ms}}
\newcommand{\os}{\textsf{os}}
\newcommand{\hprk}{\textsf{hprk}}
\newcommand{\hpok}{\textsf{hpok}}
\newcommand{\pr}{\textsf{pr}}
\newcommand{\ane}{\textsf{ae}}
\newcommand{\ras}{\textsf{ras}}
\newcommand{\old}{\textsf{old}}
\newcommand{\arr}{\textsf{arr}}
\newcommand{\female}{\textsf{female}}
\newcommand{\severe}{\textsf{severe}}
\newcommand{\aph}{\textsf{apheresis}}
\newcommand{\dia}{\textsf{dia}}
\newcommand{\orC}{\textsf{or}}
\newcommand{\andC}{\textsf{and}}
\newtheorem{example}{Example}
\newtheorem{definition}{Definition}
\newtheorem{proof}{\normalfont {\it Proof.}}
\newtheorem{proposition}{Proposition}
\newtheorem{lemma}{Lemma}
\begin{document}

\title{Coalition Formability Semantics with Conflict-Eliminable Sets of Arguments%\thanks{Grants or other notes
%about the article that should go on the front page should be
%placed here. General acknowledgments should be placed at the end of the article.}
}
%\subtitle{ve a subtitle?\\ If so, write it here}

%\titlerunning{Short form of title}        % if too long for running head

\author{Ryuta Arisaka\institute{National Institute of Informatics, Japan, email: ryutaarisaka@gmail.com} \and Ken Satoh \institute{National Institute of Informatics,
Japan, email: ksatoh@nii.ac.jp} }

\maketitle
\bibliographystyle{ecai}

\begin{abstract}
    We consider abstract-argumentation-theoretic
    coalition formability
    in this work. Taking a model in
    political alliance among political parties,
    we
    will contemplate profitability, and then formability,
    of a coalition. As is commonly
    understood, a group forms
    a coalition with another group
    for a greater good, the goodness measured against
    some criteria.
    As is also commonly understood, however, a coalition may deliver
    benefits to a group X at the sacrifice of something
    that X was able to do before coalition formation,
    which X may be no longer able to do under the coalition.
    Use of the typical
    conflict-free sets of arguments
    is not very fitting for
    this aspect of coalition,
    which prompts
    us to turn to a weaker notion,
    conflict-eliminability, as a
    property that a set of arguments should
    primarily satisfy. We require numerical
    quantification of attack strengths
    as well as of argument strengths
    for its characterisation.
    We will first analyse semantics of profitability
    of a given conflict-eliminable set forming
    a coalition with another conflict-eliminable
    set, and will then provide
    four coalition formability semantics,
    each of which  formalises
    certain utility postulate(s) taking
    the coalition profitability into account.
\end{abstract}
\section{Introduction}
Coalition formation among agents is an important topic
in many domains including economics, political science,
and computer science. Two groups of agents,
by teaming up together, could
achieve a task which they cannot otherwise
do on their own. Exploring abstract argumentation
theory
for finding
an apt characterisation of coalition
formability looks specially rewarding since
it is reasonable that we regard a coalition
as a set of arguments its members express.
There are already a few
papers in the literature looking
at this subject matter:
with preference-based argumentation frameworks
and task allocations
\cite{Amgoud05}; with a cooperative
goal generation and fulfilling \cite{Boella08},
respecting the property of reciprocity,
i.e. agents give to the coalition they are in
and benefit from it; and with dialogue games and
pay-offs \cite{Riley12}. Shared by them
is the theme of identifying a group
of individual agents who optimise benefits (or social welfare)
to themselves by being in the group.
The optimal group thus formed is free of
internal conflicts, and the participating agents do
not have to give up anything by being a part.
We consider this
kind of a coalition supportive. \\
\indent The sort of coalition formation we have
in mind, on the other hand, is one that may be
found in  political
alliance among political parties. Such alliance
is motivated if, for example, political parties
want to reach the voting threshold of passing certain bills
or want to win national elections.
A political-alliance-like coalition exhibits the following
unique characteristics:
\begin{description}[leftmargin=0.2cm]
    \item[\textbf{More organisational than individual}]
        It is not possible to freely move agents
        across multiple political parties such as to
        connect
        those having close interests together
        for optimal political party formation.
        A participant to a political party
        is often expected to stay in the party during
        the parliament term.
        The assumption of self-interested agents
        as studied in \cite{Riley12}
        does not fit very well here. There are
        repercussions to profitability of a coalition,
        too, in that it is primarily for
        a party's, or parties', benefits than
        the participants' benefits
        that a political alliance is formed.
    \item[\textbf{Partial internal conflicts}]
        Agents in a political party should support the party's
        agendas and policies. Hence they do not defeat each other about them, broadly spoken.
        It is common, however,
        that there are smaller factions within
        a political party arguing against one another
        over details.
        As a consequence, some participating agents
        may be unable to proclaim their opinions on certain
        policies as the party's opinions. Put another
        way, some individual opinions may be suppressed
        for the party.
    \item[\textbf{Asymmetry in attacks to and from a coalition}]
         For a political alliance to retain any credibility
         of the arguments it expresses,
         it must argue
         only by the conflict-free, i.e. self-contradiction-free, portion
         of the arguments of the party's participants. But the other political
         parties not in the alliance are unhindered
         by the personal circumstance
         of the alliance. If a political party
         A is in alliance with another political party
         B, then an external party C can argue
         against any argument of the individual
         participants in A or in B in order
         to criticise not just the individuals
         but the coalition.
     \item[\textbf{Better larger than smaller}]
         In \cite{Amgoud05},
         the rate of defections is associated to
         the number of agents in a coalition, thus
         a smaller set preferred. That does not
         carry over here: if one single political
         party or one single political alliance dominates the parliament,
         it has total freedom in policy making, which
         is clearly desirable. While not
         primarily on abstract argumentation, there is
         a work \cite{Bulling08} on alternating-temporal logic
         incorporating the framework of \cite{Amgoud05}.
         In the logic,
         a larger set is better.
\end{description}
\begin{example}
    \normalfont
    The Liberal Democratic Party of Japan (LDP) is traditionally
     the most influential but also a complex party
     of multiple factions. Its goals are:
     rapid, export-based economic
     growth; close cooperation with the U.S. in foreign
     and defence policies; simplification and streaming
     of government bureaucracy; privatization of
     state-owned enterprises; and adoption of measures
     such as tax reform for the ageing society.
     There are three major factions in the LDP:
     \begin{description}[leftmargin=0.2cm]
         \item[\normalfont Heisei Kenkyukai(A)'s] promises
             include
              international cooperation with China and Korea,
              construction of highways, a Gasoline tax, and protection of
              small farmers and discriminated peoples.
          \item[\normalfont Kouchi Kai(B)] promises
               international
              cooperation with China and Korea, a government
              bond and consumption tax for
              national medical care and national banks
              which financially support small firms, and
              free trade policy.
          \item[\normalfont Seiwa Seisaku Kenkyukai(C)] promises
               tax reduction for high income taxpayers
               and large companies,
               a strong military relationship
               with the U.S. for national defence issues,
               visits to Yasukuni Shrine,
               reduction of road and railway construction,
               free trade for car exports,
               lesser medical care and removal of
               protection of small farmers.
     \end{description}
    Infighting is evident. The argument of A's and B's
    for a close cooperation with China and Korea is
    dampened by C's visits to Yasukuni Shrine,
    A's argument for protecting small farmers
    and construction of highways
    is in direct conflict with  promises of C's
    to remove that protection and to reduce public construction, and
    B's free trade policy is not wholeheartedly
    welcomed by C which promises free trade chiefly for car exports.
    With all these discordance, they are
    still united in the LDP because the number matters
    in  politics: losing out in national elections and in bill-passing
    as the result of forming an independent - and
    less dominant - political party
    is the greater harm to them.
    Now, the LDP as a whole
    does not (and obviously cannot) express the visit to
    Yasukuni Shrine as the LDP's policy due to
    the internal conflict.
    However, such a circumstance means little
    to other political
    parties. To give evidence, the Japanese Communist
    Party has long criticised
    the Yasukuni Shrine visit by some individual
    members of the LDP as a way of criticising
    the LDP itself. There indeed is asymmetry in
    attacks.
\end{example}
In this work, we contemplate
abstract-argumentation-theoretic
characterisations of profitability,
and then formability, of a coalition of this kind.
To be more specific, we consider the following two questions:
(1) suppose a set of arguments that may contain
partial internal conflicts (as an abstract representation
of a political party) and suppose also rational
criteria
of coalition profitability,
with which other (disjoint)
sets of arguments that may also contain partial internal
conflicts (i.e. representations of other political parties)
can it profit from forming a coalition?; and (2)
suppose the profitability relation, suppose some rational principles to judge the goodness
of a coalition, and suppose such a set of arguments,
with which other (disjoint) such sets of arguments
can it actually form a coalition? \\
\indent The following are particularly interesting technicalities
of these semantics. First and foremost,
the above-described coalition formability is not
simply about whether
the resulting coalition is acceptable, which
could be handled by adapting the standard acceptability semantics
in abstract argumentation theory, but about whether
a set of arguments potentially having partial
internal conflicts can form a coalition with
another similar set. The former concerns
the state of the resulting set, while
the latter must be parametrised
by coalition profitabilities of both sets.
Secondly, because of the presence of
partial internal conflicts and of
the asymmetry in attacks
to and from a coalition, we have to:
(1) accommodate a weaker notion than
conflict-freeness as a property that
a set of arguments should primarily satisfy -
{\it conflict-eliminability} as we term it,
which permits members of a set to attack
other members of the same set so long as
none of them is completely defeated; (2)
obtain {\it intrinsic arguments}
of a conflict-eliminable set, which are
the arguments that would remain if
any partial internal conflicts within
the set were resolved away\footnote{This should
    not be confused. We are not meaning
    that some arguments
    would disappear as the result.
    The internal conflicts are assumed
    non-defeating. We rather mean that some arguments
    may be weakened of their potency. See \textbf{Partial
        internal conflicts}. What would remain are
then those arguments unaffected by the partial internal conflicts
and those weakened arguments.}
    ; and (3) use
the intrinsic arguments to determine
which external arguments are being attacked
by the conflict-eliminable set,
while still keeping the original arguments
of the set in order to determine if
it is being attacked by external arguments (see \textbf{
    Asymmetry in attacks to and from a coalition} above).
Here again, our task is not just
whether some set of arguments satisfies
conflict-eliminability: we must consider
if any attacks are strong enough to defeat
an argument, and, in case an argument is attacked
but not defeated, how much it would be
weakened/compromised by the attacks.
To cope with these, we attach {\it argument capacity},
a numerical value, to each argument,
and an attack strength, again a numerical value,
to each attack.
The idea
is: a set of arguments is conflict-eliminable
just when none of the members of the set attack an argument
of the same set with a greater numerical value
than the argument's capacity. As the argument capacities
and attack strengths are both numerical,
it is easy to derive its intrinsic arguments and
their attacks on external arguments.
\subsection{Related work}
We are not aware of other works in
the literature of abstract argumentation theory
dealing with this kind of political-alliance-like coalition
formation, although an earlier draft of this work
has been already cited in \cite{Arisaka16b}
for a more specific, logic-oriented
instantiation with classical logic sentences
and belief contraction. Also, to the best of
our knowledge, the previous
approaches proposed in the abstract argumentation literature
are not self-sufficient for dealing with
the two above-mentioned technicalities.
We have already mentioned the key works
on coalition formation \cite{Amgoud05,Boella08,Riley12}.
They apply Dung's acceptability semantics
\cite{Dung95} for
characterising acceptability of a coalition
that is individual-benefit-oriented, that is conflict-free,
and that generally prefers a smaller set. In Section 4
of \cite{Amgoud05} and in \cite{Boella08}, a coalition
is associated with a set of tasks/goals, and conflicts
between coalitions are measured such as by competition
which occurs when two coalitions share the same
tasks/goals.
We, however,
focus on the described political-alliance-like
coalition profitability/formability semantics
with conflict-eliminable sets of arguments.
We do not consider the meta-knowledge of tasks or goals.
Instead, we measure profitability of a coalition for
a conflict-eliminable set of arguments by three criteria:
(1) the size of the coalition; (2) whether
the number of attackers to the conflict-eliminable set of arguments increases
or decreases in the coalition;
and (3) how defended the coalition is from external
arguments. Coalition formability is relativised
to profitabilities of two conflict-eliminable sets.
A rather different perspective of coalition formation:
calculation of probabilistic likelihood of a coalition formation
capable of achieving some task, given the prior probability
of agents' joining in a coalition and of preventing
other members from joining in the coalition,
was highlighted in probabilistic argumentation frameworks
\cite{Li12}. Such quantitative judgement is out of the scope
of this work. \\
\indent We mention other works relevant to ours.
%There are even other works related to ours on
%more technical points.
%Our argumentation frameworks not having formally
%introduced yet, we choose to mention
%of them in the last section of this paper.
\begin{description}[leftmargin=0.2cm]
    \item[\textbf{Attack-tolerant abstract argumentation}]
         Characterisation of acceptability
         semantics for a non-conflict-free
         set of arguments is gaining
         attention.
         The 2-valued semantics \cite{Pereira07}
         makes use of reductio ad absurdum
         to resolve inconsistency. Conflict-tolerant semantics \cite{Arieli12}
relaxes conflict-freeness by using four values
(accept, reject, no opinion, and mixed feeling)
to label arguments for paraconsistent
abstract argumentation. However, these approaches
do not incorporate numerical values,
which makes it difficult to reason about the strength
of attacks. Weighted argument systems \cite{Dunne11}
attach numerical values to attack relations.
There is also a system-wide numerical value called
the inconsistency budget. In their systems,
conflicts in a set of arguments are quantified
as the sum of numerical values given to the attack relations
appearing in the set. If the sum
does not exceed the given inconsistency budget, then
the set is considered para-conflict-free.
Their acceptability semantics is relative to
the global inconsistency budget. In our setting,
having numerical attack strengths alone is
not sufficient, as we must know intrinsic arguments
of a conflict-eliminable set, which
relies
both on attack strengths and on
argument strengths. We do not
use any global
and uniform budget. Further, substantively we do not tolerate
any inconsistency: intrinsic arguments must be conflict-free.
Social abstract argumentation frameworks \cite{Leite11},
to which an equational approach  \cite{Gabbay14} also relates,
attach numerical values to arguments in the form
of {\it for} votes and {\it against} votes. They
allow for fine-grained para-consistency.
We could potentially  adapt their approach
to characterise
our conflict-eliminability. However,
their numerical attack characterisations by votes are quite specific.
We choose more abstract, axiomatic characterisations.
In the literature, axiomatic approaches have been considered
to, for instance, ensure logical consistency
of an argumentation framework \cite{Amgoud16,Amgoud09,Caminada05}.
Classifications of attack relations by axioms
they satisfy have been also done \cite{Gorogiannis11}. The axiomatic
approaches help regulate
an abstract argumentation system from a general
standpoint.
   \item[\textbf{Dynamic abstract argumentation}]
       Our framework possesses
       a certain kind of dynamic nature.
       So far dynamic changes have been considered
       within the literature of abstract argumentation
       to: assume structural argumentation \cite{Dung09,Bex03,Prakken04,Caminada07,Prakken10}
       and modify non-falsifiable facts \cite{Rotstein08};
       add a new argument \cite{Cayrol10,Oikarinen11,Baumann10b};
       revise
       attack relations \cite{Coste-Marquis14};
       revise an argumentation framework by encoding
       it into propositional logic \cite{Coste-Marquis14b}; and revise
       an argumentation framework with an argumentation
       framework, see \cite{Baumann15}.\footnote{There are
       also analysis on mutability of acceptable
       arguments by adding or removing an argument
       and/or an attack \cite{Boella09a,Boella09b}.}
       Given an argumentation
       framework, these works calculate a revised
       argumentation framework. That is, they
       derive a post-state from a pre-state
       given some input.  We, however, require interactions
       between the initial set of arguments (the pre-state)
       and intrinsic arguments of a conflict-eliminable
       set (post-states) due to the asymmetry in attacks
       to and from a coalition.
       The pre-state/post-state coordinations
       are, as far as we are able to fathom, not dealt with in the
       above-mentioned
       studies. Meanwhile, one of the works
       on coalition formation as mentioned earlier, namely
       \cite{Boella08}, admits coalitional and
       non-coalitional views of agents. From
       a non-coalitional view of agents,
       a number of coalitional views may be derived.
       Still, what they consider are conflicts among
       coalitional views (for goal fulfillment), which is
       a problem possessing a different nature.
\end{description}
In the rest, we will:
recall Nielsen-Parsons' argumentation frameworks \cite{Nielsen06}
that generalise Dung's ones with group attacks (Section 2);
introduce our argumentation frameworks for conflict-eliminable
sets of arguments and find a link to Nielsen-Parsons'
frameworks as a side contribution (Section 3); and develop
semantics for profitability, and then formability,
of coalition formation, at the same time presenting  theoretical
results (Section 4), before
drawing
conclusions.
\section{Preliminaries}
While Dung's argumentation frameworks \cite{Dung95}
are the most important in the abstract argumentation literature,
Nielsen-Parsons' generalised versions
with
group attacks are probably closer to our
own. Let us recall the key definitions
of their frameworks.\\
\indent An {\it  argument} is an abstract entity,
and the class of all arguments is $\mathcal{A}$.
An {\it argumentation framework}
is a tuple $(A, G)$ where
$A \subseteq_{\text{fin}} \mathcal{A}$
and $G: (2^{\mathcal{A}} \backslash \emptyset) \times
\mathcal{A}$. A set $A_1 \subseteq
A$ is said to {\it attack}
an argument $a \in A$ if and only if,
or simply
iff,
$(A', a) \in G$ for some $A' \subseteq A_1$.
We say that $(A', a) \in G$ is {\it minimal}
iff for every $A'' \subset A'$, $(A'', a) \not\in G$.
A set $A_1 \subseteq A$
is {\it conflict-free} iff there exists
no $a \in A_1$ such that $A_1$ attacks $a$.
A set $A_1 \subseteq A$
{\it defends} $a \in A$ iff
if $(A_x, a) \in G$ for $A_x \subseteq A$
is minimal, then
$A_1$ attacks some $a_x \in A_x$.
A set $A_1 \subseteq A$
accepts $a \in A$ iff
$A_1$ defends $a$.
A set  $A_1 \subseteq A$
is admissible iff $A_1$ accepts all its members.
A set $A_1 \subseteq A$
is a preferred set (extension) iff
$A_1$ is admissible and
there exists no $A_1 \subset A_2 \subseteq A$
such that $A_2$ is admissible.
There are other notions such as
complete sets (extensions),
stable sets (extensions), and
the grounded set (extension). An interested
reader will find more information in \cite{Dung95,Nielsen06}.
\section{Argumentation Frameworks
    for Conflict-Eliminable Sets of Arguments}
Let $\mathbb{N}$ be the class of natural numbers
including 0,
and let $\mathcal{S}$ be $\mathcal{A} \times
\mathbb{N}$.
We refer to any element of
$\mathcal{S}$ by $s$ with or without a subscript.
In our development, members of
$\mathcal{S}$ (not of $\mathcal{A}$) are
arguments.
Not just any set of arguments
will we be interested in, however.
\begin{definition}[Coherent sets of arguments]
    \normalfont
    Let $S_1$ be a subset of $\mathcal{S}$.
    We say that $S_1$ is coherent
    iff $S_1$ satisfies the following conditions.
    \begin{enumerate}[leftmargin=0.5cm]
        \item $S_1$
    is a finite subset of $\mathcal{S}$.
\item For any $(a, n) \in S_1$,
    it holds that $n > 0$.
\item For any $(a, n) \in S_1$,
    there is no $m \not= n$ such that
    $(a, m) \in S_1$.
\end{enumerate}
\end{definition}
An argumentation framework usually
satisfies the first condition.
To explain the second condition, we mention
that each argument has {\it argument capacity}.
For $(a, n) \in \mathcal{S}$, $a$ is its {\it identifier}, and
$n$ is its {\it capacity}. The greater
the capacity of an argument is, the more information
it contains. That an argument has
argument capacity of 0 basically means
that it has no utility. The third property is to ensure
that each argument identity is used
by at most one argument in a chosen
subset of $\mathcal{S}$.
From here on, by $S$ with or without a subscript
we denote a coherent set of arguments.
\begin{example}
    \normalfont
    Consider four arguments:
\begin{enumerate}
    \item $a_1$: I support cooperation with China and Korea.
         I support construction of highways. I support
         protection of small farmers. I support a Gasoline tax.
     \item $a_2$: I support cooperation with China and Korea.
         I support free trade. I support consumption tax
         for national medical care and national banks.
     \item $a_3$: I support visits to Yasukuni Shrine. I am against
          construction of highways. I support
          a close military relationship with the U.S.
          I support free trade,
          but only for car exports. I am against
          protection of small farmers.
      \item $a_4$: I denounce visits to Yasukuni Shrine.
\end{enumerate}
    We could (though do not have to)
    treat the number of sub-arguments of each argument as its
    argument capacity so that we have
    $(a_1, 4)$, $(a_2, 3)$, $(a_3, 5)$, $(a_4, 1)$.
\end{example}
\indent Now, let $R$ be
a partial function $2^{\mathcal{S}}  \times \mathcal{S} \rightharpoonup \mathbb{N}$ that
satisfies the following conditions (or axioms). Informally,
$R(S, s)$ represents the attack strength
of $S$'s attack on $s$. In the below, `$R$
is defined for $(S, s)$' is synonymous
to `$R(S, s)$ is defined'.
\begin{enumerate}
    \item $R$ is undefined for
        $(\emptyset, s)$ for any
        $s \in \mathcal{S}$ \textbf{[Coherence]}.
    \item For any $S_1 \subseteq S \subseteq_{\text{fin}} \mathcal{S}$
        and for any $s \in S$,
        if $R$ is defined for
        $(S_1, s)$, then
        $R$ is defined for
        any $(S_2, s)$ for $\emptyset \subset S_2 \subseteq S_1$.
        \textbf{[Quasi-closure by
            subset relation]}.
    \item For any $S_1, S_2 \subseteq S \subseteq_{\text{fin}}
        \mathcal{S}$ and for any $s \in S$,
        if $R$ is defined both for $(S_1, s)$
        and for $(S_2, s)$, then
        $R$ is defined also for $(S_1 \cup S_2, s)$
        \textbf{[Closure by set union]}
    \item For any $S_1 \subseteq S \subseteq_{\text{fin}}
        \mathcal{S}$ and for any $s \in S$ such that
        $R$ is defined
        for $(S_1, s)$, it holds that
        $R(S_1, s) > 0$ \textbf{[Attack with a positive strength]}.
    \item For any $(a, n), (a, m) \in \mathcal{S}$
        such that $n \leq m$, the following
        holds true:
        if $R(S_1, s)$
        for
         some $s \in \mathcal{S}$ and
        for some $S_1 \subseteq_{\text{fin}} \mathcal{S}$
         such that $(a, n) \in S_1$ is defined, then
         $R(S_2, s)$ for
        $S_2 = (S_1 \backslash (a, n)) \cup (a, m)$
        is defined and is such that
        $R(S_1, s) \leq R(S_2, s)$
        \textbf{[Attack monotonicity 1 (source)]}.
        \item For any $S_1, S_2 \subseteq S \subseteq_{\text{fin}} \mathcal{S}$
        and for any $s \in S$, if $R$ is defined
        for $(S_1, s)$, $(S_2, s)$ and
        $(S_1 \cap S_2, s)$,
        then $R(S_1 \cap S_2, s) \leq
        R(S_i, s)$ for both $i = 1$ and $i = 2$
        \textbf{[Attack monotonicity 2 (source)]}.
\item For any $(a, n), (a, m) \in \mathcal{S}$
        such that $n \leq m$, it holds that if
        $R$ is defined for $(S_1, (a, n))$ for some
        $S_1 \subseteq_{\text{fin}} \mathcal{S}$
        such that $S_1 \cap
        \bigcup_{l \in \mathbb{N}}\{(a, l)\} = \emptyset$,
        then it is defined for
        $(S_1, (a, m))$, and, moreover,
        $R(S_1, (a, n)) \leq R(S_1, (a, m))$
        \textbf{[Attack monotonicity 3 (target)]}
    \item For any $S_1 \subseteq_{\text{fin}} \mathcal{S}$
        and for
        any $s \in
        \mathcal{S}$,
        $R$ is undefined for $(S_1, s)$ if
        $s \in S_1$.
        \textbf{[No self attacks]}.
\end{enumerate}
\textbf{[Coherence]} ensures that an attack must come
from some argument(s). \textbf{[Quasi-closure by
    subset relation]} ensures that there
is a group attack from a set of arguments
on an argument
just because each member of the set is attacking
the argument. This can be contrasted
with the group attacks in Nielsen-Parsons' argumentation frameworks. But it must
be noted, as we are to mention shortly, that
that there is an attack of an argument
$(a_1, n_1)$
on another argument $(a_2, n_2)$ does not
mean that $(a_1, n_1)$ defeats $(a_2, n_2)$
in our framework.
\textbf{[Closure by set union]} is the reverse
of \textbf{[Quasi-closure by subset relation]}. The purpose
of \textbf{[Attack with a positive strength]}
is as follows: we mentioned earlier that $R(S_1, s)$ is
the strength of attack by $S_1$ on $s$, measured
in $\mathbb{N}$. The value being positive
signals $S_1$'s attack on $s$.
The value being 0 would mean that $S_1$ is not
attacking $s$. The purpose
of an attack relation in abstract argumentation
frameworks is to know which arguments
attack which arguments. It is for this reason
that we only
consider positive values for $R$. \textbf{[Attack
    monotonicity 1]} expresses the following reasonable
property: an attack may be occurring from
some $S_1 \subseteq S \subseteq_{\text{fin}} \mathcal{S}$
on some $s \in S$; now, increase the attack capacity
of just one argument $s_1 \in S_1$, keeping
all else equal; then the attack which occurred
before the capacity increase should still occur.
\textbf{[Attack monotonicity 2]} expresses the property
that if an attack occurs from a set of arguments
on an argument with some strength, then any superset
does not decrease the attack strength.
To explain
\textbf{[Attack monotonicity 3]},
let us say that
a set of arguments is attacking an argument
with certain argument capacity. This intuitively means
that the set intends to suppress the argument.
Now, if the argument capacity of the argument
increases, the set still intends to suppress
the argument just as strongly or even more strongly,
but not less strongly, for there are more materials
in the argument that the set could attack. This
is the direct reading of the condition.
In the technical development to follow, the converse reading
will be more useful: if the argument
capacity of the argument on the other hand decreases,
the set may no longer intend to suppress it further
(because an argument with no content ceases
to be any argument in an ordinary sense).
Finally, \textbf{[No self attacks]} prevents self-contradictory
arguments from being present.\\
\indent Additivity
of attack strengths is not postulated for $R$:
if an argument $s_1$ is attacked by $s_2$ and $s_3$
such that $R(\{s_2\}, s_1) = n_1$ and that
$R(\{s_3\}, s_1) = n_2$,
it is not necessary that
$R(\{s_2, s_3\}, s_1) = n_1 + n_2$. The additivity
    holds good when each attack
    can be assumed independent, but may not
     in other cases.
    A generalised
version of [Attack  monotonicity 1] holds good.
Let $\pi$ be a projection function that takes
a natural number and an ordered tuple $\Gamma$ (not
the particular symbol $\Gamma$ but any
ordered tuple)
and that outputs a set  member,
such that $\pi(n, \Gamma) = $ $\{$the $n$-th
    member of $\Gamma$$\}$. It is undefined
if $n$ is greater than the size of the ordered set.
\begin{proposition}[Generalised attack
    monotonicity 1]
    Let $S_1 \subseteq_{\text{fin}} \mathcal{S}$
    and $s \in \mathcal{S}$ be such that
    $R(S_1, s)$ is defined. Then
    if $S_2 \subseteq_{\text{fin}} \mathcal{S}$
    is such that: (1) $\bigcup_{s_x \in S_1} \pi(1, s_x)
    = \bigcup_{s_x \in S_2} \pi(1, s_x)$; and that
    (2) $(a, n) \in S_1$ materially implies
    $(a, m) \in S_2$ for $n \le m$,
    then $R(S_2, s)$ is defined.
\end{proposition}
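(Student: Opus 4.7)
The plan is to prove this by induction on $k = |\{(a,n) \in S_1 \mid \exists\, m > n,\; (a,m) \in S_2\}|$, the number of identifiers whose capacity strictly increases in passing from $S_1$ to $S_2$. Condition (1) gives a bijection between $S_1$ and $S_2$ at the level of first coordinates, and since coherent sets contain at most one argument per identifier, condition (2) means that each identifier $a$ appears in $S_1$ as $(a, n_a)$ and in $S_2$ as $(a, m_a)$ with $n_a \le m_a$. In particular $S_1 = S_2$ precisely when $k = 0$, which handles the base case immediately from the hypothesis that $R(S_1, s)$ is defined.

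For the inductive step with $k > 0$, pick any $(a, n) \in S_1$ for which $(a, m) \in S_2$ with $n < m$, and set $S_1' = (S_1 \setminus \{(a, n)\}) \cup \{(a, m)\}$. Applying \textbf{[Attack monotonicity 1 (source)]} directly to $S_1$ and $s$ yields that $R(S_1', s)$ is defined. One next verifies that $(S_1', S_2)$ still meets the hypotheses of the proposition: the identifiers of $S_1'$ coincide with those of $S_1$ and hence of $S_2$ (condition 1); $S_1'$ is coherent because $m > 0$ and because the identifier $a$ still occurs uniquely (it occurred uniquely in $S_1$); and condition (2) persists since the only change was to raise $(a, n)$ to $(a, m)$ with $m \le m_a$. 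The associated counter has dropped to $k - 1$, so the induction hypothesis applied to $S_1'$ gives that $R(S_2, s)$ is defined.

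The only real obstacle is the bookkeeping at each inductive step: one must check that $S_1'$ remains a coherent set and that conditions (1) and (2) are still fulfilled with respect to $S_2$, so that \textbf{[Attack monotonicity 1 (source)]} may legitimately be invoked again. Once this is in place, the induction closes routinely, reaching $S_2$ after finitely many single-argument capacity increases. (A parallel argument also delivers the natural strength inequality $R(S_1, s) \le R(S_2, s)$, although this is not part of the stated conclusion.)
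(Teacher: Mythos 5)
Your proof is correct and follows essentially the same route as the paper's: induction on the number of identifiers whose capacity strictly increases from $S_1$ to $S_2$, with the base case $S_1 = S_2$ and each inductive step handled by a single application of \textbf{[Attack monotonicity 1 (source)]}. The paper states this in three terse lines; your version merely spells out the coherence and hypothesis-preservation bookkeeping that the paper leaves implicit.
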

\begin{proof}
    \normalfont
    By induction on the number of
    arguments $s_1 \in S_1$ and $s_2 \in S_2$
    for which $\pi(1, s_1) = \pi(2, s_2)$
    $\andC$ $\pi(2, s_1) < \pi(2, s_2)$.
Here and everywhere, we may make
use of $\andC$ (having the semantics of classical logic conjunction),
distinguishing
`and' in formal contexts from `and' in natural contexts
for greater clarity.
    The base case is vacuous.
    Use [Attack relation monotonicity]
    for inductive cases.
\end{proof}
Our argumentation framework
is $(S, R)$ for some coherent
set of arguments $S$ and for some $R$.
\subsection{Attacks}
We distinguish complete attacks (defeats) from
partial attacks (attacks).
\begin{definition}[Attacks and defeats]
    \normalfont
   We say that $S_1 \subseteq S$ attacks
   $s \in S$ iff there exists
   $S_2 \subseteq S_1$ such that
   $R$ is defined for $(S_2, s)$.
      We say that $S_1 \subseteq S$ defeats
   $s \in S$ iff
   $S_1$ attacks $s$ $\andC$  there exists
   some $S_2 \subseteq S_1$ such that:
   (1) $R$ is defined for $(S_2, s)$; and
   (2) if $R$ is defined for $(S_x, s)$ for $S_x \subseteq S_1$,
   then $\pi(2, s) \leq R(S_2, s)$.\footnote{See
       in the proof of Proposition 1 for
       what
       $\andC$ is.}
\end{definition}
Informally, an attack defeats its target
when the attack strength surpasses the target's
argument capacity.
\begin{definition}[Maximum attack strengths]
   \normalfont
   We define $\Rmax(S_1, s)$ to be:
   0 if $S_1$ does not attack $s$;
   otherwise, $R(S_2, s)$ for
   some $S_2 \subseteq S_1$ such that:
   (1) $R$ is defined for $(S_2, s)$;
   and (2)
   if $R$ is defined for
   $(S_x, s)$ for $S_x \subseteq S_1$, then
   $R(S_x, s) \leq R(S_2, s)$.
   \end{definition}
   \begin{example} \normalfont
       Let us build on Example 2. Notice that $(a_1, 4)$ disagrees
       with $(a_3, 5)$ on 3 points.
       We can model the attacks between them via
       $R(\{(a_1, 4)\}, \{(a_3, 5)\}) = R(\{(a_3, 5)\}, (a_1, 4)) = 3$,
       meaning that both of the attacks weaken the attacked argument
       by 3 sub-arguments.
   \end{example}
%\begin{proof}  \normalfont
%    Immediate from [Attack strength monotonicity]. \hfill$\Box$
%\end{proof}
\subsection{Conflict-eliminable sets}
\begin{definition}[Conflict-eliminable sets of
    arguments]\normalfont
    We say that $S_1 \subseteq S$
    is conflict-eliminable
    iff there exists no $s \in S_1$
    such that $S_1$ defeats $s$.
\end{definition}
Conflict-eliminability is a weaker
notion of the usual conflict-freeness which is
the property that there exists no $s \in S_1$
such that $S_1$ attacks $s$.
For the rationale behind obtaining
this definition and using it as a primitive
entity in our argumentation framework, we
point back to Example 1 where infighting was evident and
where the LDP political policy did not contain
any internally conflicting arguments.
\begin{definition}[Intrinsic arguments]
    \normalfont
    Let $\alpha: 2^\mathcal{S} \rightharpoonup
    2^{\mathcal{S}}$ be
    such that it is defined
    for $S_1 \subseteq S$
    iff $S_1$ is conflict-eliminable.
    If $\alpha$ is defined for $S_1 \subseteq S$, then
    we define that
    $\alpha(S_1) =
   \{(\pi(1, s), n) \ | \
       s \in S_1\ \andC\
       n = \pi(2, s) - \Rmax(S_1, s)\}$.
   We say that $\alpha(S_1)$
   are intrinsic arguments
   of $S_1$.
\end{definition}
Intrinsic arguments of $\lkakko (a_1, 4), (a_3, 5) \rkakko$
in Example 2 are $\lkakko (a_1, 1), (a_3, 2) \rkakko$.
\begin{proposition}[Well-definedness]
    For any $S_1 \subseteq S$,
    if $\alpha$ is defined for $S_1$,
    then every member of $\alpha(S_1)$
    is a member of $\mathcal{S}$:
    in particular, there
    exists no $a \in \mathcal{A}$
    and no $n \in \mathbb{N}$
    such that $(a, -n) \in \alpha(S_1)$.
\end{proposition}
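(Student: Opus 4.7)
The plan is to reduce the proposition to a single inequality: for every $s \in S_1$, $\Rmax(S_1, s) \le \pi(2, s)$. Each member of $\alpha(S_1)$ has the form $(\pi(1,s),\, \pi(2,s) - \Rmax(S_1, s))$ for some $s \in S_1$; the first component manifestly lies in $\mathcal{A}$, so the full claim, including the ``in particular'' clause that rules out negative second components, collapses to non-negativity of $\pi(2,s) - \Rmax(S_1, s)$.

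To establish the inequality, I would split on whether $S_1$ attacks $s$. If it does not, the definition of $\Rmax$ gives $\Rmax(S_1, s) = 0$, while the second clause of coherence gives $\pi(2, s) > 0$, so the inequality holds strictly. If $S_1$ does attack $s$, the definition of $\Rmax$ furnishes some $S_2 \subseteq S_1$ realising the maximum attack strength, i.e.\ $R(S_2, s) = \Rmax(S_1, s)$; such a maximiser exists because $S_1$ is finite and therefore has only finitely many candidate subsets. Now, since $\alpha$ is defined on $S_1$, $S_1$ is conflict-eliminable, and in particular $S_1$ does not defeat $s$. By the definition of defeat, the latter would require some $S' \subseteq S_1$ with $R$ defined on $(S', s)$ and $\pi(2, s) \le R(S', s)$; since the chosen $S_2$ already satisfies the first clause, the second must fail for it, yielding $\Rmax(S_1, s) = R(S_2, s) < \pi(2, s)$, as needed.

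The main delicate point, as I see it, is the reading of condition (2) in the definition of defeat: its antecedent ``if $R$ is defined for $(S_x, s)$ for $S_x \subseteq S_1$'' is automatically discharged by condition (1), so the clause functions simply as $\pi(2, s) \le R(S_2, s)$. Once this reading is fixed, the case analysis runs directly from the definitions with no further manoeuvring; there is no recursion or invocation of the $R$-axioms beyond positivity of argument capacities.
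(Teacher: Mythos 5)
Your proposal is correct: the paper states this proposition without proof, and your argument (reduce to $\Rmax(S_1,s) \le \pi(2,s)$ for each $s\in S_1$, then split on whether $S_1$ attacks $s$ and use conflict-eliminability to rule out $R(S_2,s)\ge\pi(2,s)$ for the maximising $S_2$) is exactly the intended, definition-chasing justification, including the correct reading of clause (2) in the definition of defeat. No gap.
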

Intrinsic arguments of a conflict-eliminable
set must be substantively conflict-free.
\begin{definition}[The view of intrinsic arguments]
    \normalfont
Let $\Delete_R(S, S_x)$ be
$\{(S_y, s) \ | \ s \in S_x \subseteq S \ \andC\
    S_y \subseteq S_x
    \ \andC\ R(S_y, s)
    \text{ is defined.}
\}$, which is the set of attack relations within
$S_x$.  Now, let $S_1$ be a subset of $S$.
If $\alpha$ is defined for $S_1$, then
    we say that $((S \backslash S_1)
    \cup \alpha(S_1), R\backslash \Delete(S, S_1))$ is the view
    that $S_1$ has about $S$, or simply
     $S_1$'s view of $S$.
    We denote
    $S_1$'s view of $S$ by $\view(S, S_1)$.
\end{definition}
\begin{proposition}\label{prop_4}
   Let $S_1$ be a subset of $S$.
   If
   $\alpha(S_1)$ is defined,
   then $\alpha(S_1)$ is conflict-free
   in $\view(S, S_1)$.
\end{proposition}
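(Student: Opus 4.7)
The plan is proof by contradiction via a monotonicity lift. Assume $\alpha(S_1)$ is not conflict-free in $\view(S, S_1)$, so some $S_z \subseteq \alpha(S_1)$ attacks some $s' \in \alpha(S_1)$ through an attack in $R \setminus \Delete(S, S_1)$: $R(S_z, s')$ is defined and $(S_z, s') \notin \Delete(S, S_1)$. Write $s' = (a_j, m_j)$ and, for each $(a_i, m_i) \in S_z \cup \{s'\}$, name its original $s_i = (a_i, n_i) \in S_1$, so that $m_i = n_i - \Rmax(S_1, s_i) \leq n_i$, with equality exactly when $\Rmax(S_1, s_i) = 0$, i.e.\ when $s_i' \in S_1$.

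The main technical step is to lift this attack back into $S_1$. Coherence of $S_1$ makes identifiers unique in $\alpha(S_1)$, and \textbf{[No self attacks]} forces $s' \notin S_z$; together they give $S_z \cap \bigcup_l \{(a_j, l)\} = \emptyset$, which is exactly the side-condition needed to invoke \textbf{[Attack monotonicity 3 (target)]} and raise the target from $(a_j, m_j)$ to $s_j = (a_j, n_j)$, yielding that $R(S_z, s_j)$ is defined. Proposition 1 (generalised source monotonicity) then raises each $(a_i, m_i) \in S_z$ to its $s_i \in S_1$, delivering a defined $R(S_z^*, s_j)$ with $S_z^* = \{s_i : s_i' \in S_z\} \subseteq S_1$ and $s_j \in S_1$. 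By \textbf{[Attack with a positive strength]}, $R(S_z^*, s_j) > 0$, so $(S_z^*, s_j) \in \Delete(S, S_1)$.

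I would close by a case split on $s'$. If $m_j = n_j$, i.e.\ $s' = s_j \in S_1$, then by definition $\Rmax(S_1, s_j) = 0$, directly contradicted by $R(S_z^*, s_j) > 0$ for $S_z^* \subseteq S_1$. The other case, $m_j < n_j$, is where I expect the main obstacle to lie: the lifted attack $(S_z^*, s_j)$ has been excised from the view by $\Delete(S, S_1)$, but the witness $(S_z, s')$ at the intrinsic level does not literally sit in $\Delete(S, S_1)$ since neither $s'$ nor (in general) every element of $S_z$ belongs to $S_1$. I plan to resolve this by appealing to the bound $R(S_z, s') \leq R(S_z, s_j) \leq R(S_z^*, s_j) \leq \Rmax(S_1, s_j) = n_j - m_j$ coming from the two monotonicity axioms, together with the role of the view: the excision is designed so that every internal conflict of $S_1$ strips away with its lifted counterpart over $\alpha(S_1)$, and the monotonicity chain exhibits $(S_z, s')$ as exactly such an inherited occurrence of $(S_z^*, s_j)$. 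Making that correspondence between an attack on an intrinsic argument and its $S_1$-internal witness precise within the bookkeeping of $R \setminus \Delete(S, S_1)$ is the delicate point that the proof has to pin down.
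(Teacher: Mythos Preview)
The paper states this proposition without proof, so there is nothing to compare against directly. Your Case~1 ($m_j = n_j$, i.e.\ $s' = s_j \in S_1$) is sound: the lift via Proposition~1 gives $R(S_z^*, s_j) > 0$ with $S_z^* \subseteq S_1$, whence $\Rmax(S_1, s_j) > 0$, contradicting $n_j - m_j = 0$.

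Case~2 contains a real gap that your closing paragraph does not close. The monotonicity chain you write down only bounds the \emph{strength}, $R(S_z, s') \leq n_j - m_j$; but conflict-freeness demands that the attack be \emph{undefined} in $R \setminus \Delete(S, S_1)$, not merely weak---a value of $1$ is still an attack. And $\Delete(S, S_1)$, as literally written in Definition~6, contains only pairs $(S_y, s)$ with $S_y \subseteq S_1$ and $s \in S_1$; when $m_j < n_j$ we have $s' = (a_j, m_j) \notin S_1$, so $(S_z, s')$ is never in $\Delete(S, S_1)$ regardless of what happens to its lifted counterpart $(S_z^*, s_j)$. Nothing in the eight axioms forces $R(S_z, s')$ to be undefined: take $S_1 = \{(a_1,4),(a_3,5)\}$ with mutual attacks of strength $3$ (the running example), and set $R(\{(a_1,1)\},(a_3,2)) = 1$; all monotonicity conditions are respected, yet this attack survives into the view and $\alpha(S_1)$ fails to be conflict-free. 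Your intuition that ``the excision is designed so that every internal conflict of $S_1$ strips away with its lifted counterpart'' names precisely the missing ingredient, but the stated definition of $\Delete$ does not deliver it. To complete the argument you would need either a stronger reading of $\Delete$ (one that also removes attacks among arguments whose identifiers occur in $S_1$) or an additional constraint on $R$; the paper's remark in Example~4 that ``by Definition~6 and the monotonicity conditions of $R$, there cannot be attacks between $(a_1,1)$ and $(a_3,2)$'' suggests the authors intend such a reading, but it is not derivable from the formal definitions as written.
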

\begin{example} \normalfont{\ }\\
  \includegraphics[scale=0.34]{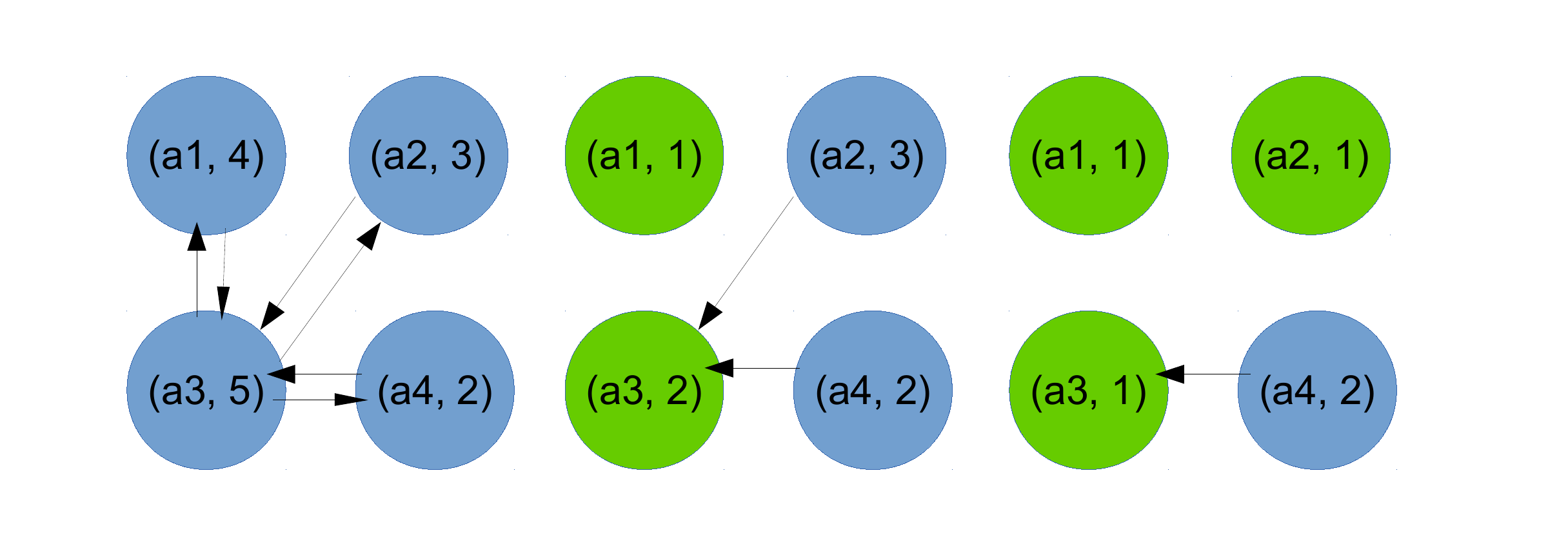}\\
The left drawing shows the arguments and attacks in our
running example when none of the arguments
are in a coalition with others.
The middle and the right drawings show the views that
$\lkakko (a_1, 4), (a_3, 5) \rkakko$
and $\lkakko (a_1, 4), (a_2, 3), (a_3, 5) \rkakko$
have about them.\footnote{
    These are not uniquely determined
    by the left drawing. We
    are deriving all the three drawings from
    the actual arguments found in
    Example 2 with the 8 conditions and Definition 6 as
    constraints.}
    Consider the middle figure.
  As we described earlier, three
  subarguments are in conflict
  between $(a_1, 4)$ and $(a_3, 5)$,
  and the intrinsic arguments of $\{(a_1, 4), (a_3, 5)\}$
  are $\{(a_1, 1), (a_3, 2)\}$.
$(a_1, 1)$
represents just one sub-argument
(X) for a Gasoline tax.
$(a_3, 2)$
represents just two sub-arguments (Y) for
military relationship with the U.S.
and (Z) for free trade for
car exports. By Definition 6 and
the monotonicity conditions of $R$,
there cannot be attacks between
$(a_1, 1)$ and $(a_3, 2)$ in the middle drawing.
There is a conflict
between (Z) and a sub-argument of
$(a_2, 3)$ on free trade, and so we
put attack
arrows between them in the middle drawing.
But none of (X), (Y) and (Z)
are in conflict with $(a_4, 1)$, which
explains why we do not put attack arrows between them
in the middle drawing.
Similarly for the right drawing, intrinsic
arguments are as shown in the figure.
Note that the subargument (Z) of $(a_3, 5)$
is also not in agreement with one subargument of $(a_2,3)$
in addition to those that are in conflict with
subarguments of $(a_1, 4)$. Consequently
the intrinsic arguments contain
$(a_3, 1)$ instead of $(a_3, 2)$. Meanwhile
for $(a_2, 3)$,
in addition to the policy on free trade,
the policy on international relation is
also in conflict with that of $(a_3, 5)$. Consequently,
the intrinsic arguments contain
$(a_2, 1)$.

\end{example}
%Note that it is not necessary that
%$\view(S, S_1)$ satisfies all the conditions for $R$.
%Suppose that there is a group attack
%from a set of arguments $S_2$ on $s \in S_1$,
%such that $S_1 \cap S_2 \not= \emptyset$ $\andC$
%$S_1 \cup S_2 \not= S_1$. Then the attack remains in
%$\view(S, S_1)$, and [Quasi-closure by subset relation]
%does not satisfy.\footnote{That we should leave
%    those attacks undestroyed seems rational: a coalition %should not have any authority over
%    an attack or an argument that does not belong
%    to it. }
\subsection{Coalition attacks, c-admissible sets and
c-preferred sets}
%\begin{definition}[C-relation]
%    \normalfont
%  Let $S_1$ be a subset of $S$.
%  We define $C_{S_1}$ to be: $R$
%  if $S_1$ is not conflict-eliminable;
%  otherwise, the set union of:
%  \begin{enumerate}
%      \item
%  $\{(S_2, s) \in R \ | \
%      S_2 \cap S_1 = \emptyset \ \andC\
%      s \not\in S_1\}$ (Unaffected-out)
%  \item
%      $\{(S_2, s) \in R \ | \
%          s \in S_1\}$ (Unaffected-in).
%  \item $\{(S_2, s_1)  \ | \ (S_3, s) \in R
%          \ \andC\ \zeta((S_2, s_1), S_1) = (S_3, s) \}$ %(Affected-out).
%  \end{enumerate}
% \ryuta{Need to define the connection
%     among the same s, and monotonicity. }
%\end{definition}
A coalition may attack external arguments
only by its intrinsic arguments.
\begin{definition}[C-attacks and c-defeats]
    \normalfont
   We say that
   $S_1 \subseteq S$ c-attacks $s \in S$ iff
   $\alpha$ is defined for $S_1$ $\andC$
   there exists some $S_2 \subseteq
   \alpha(S_1)$ such that
   $\pi(2, \view(S, S_1))$ is defined for
   $(S_2, s)$.
   We say that $S_1$
   c-defeats
    $s \in S$ iff $S_1$ c-attacks
    $s$ $\andC$
    $\pi(2, \view(S, S_1)))(S_3, s)
    \geq \pi(2, s)$ for some $S_3 \subseteq \alpha(S_1)$.
\end{definition}
    An example of c-defeat in our running example
    is by
    $\lkakko (a_3, 5) \rkakko$ on
    $(a_4, 1)$, and an example
    of c-attack is by
    $\lkakko (a_1, 4), (a_3, 5) \rkakko$
    on $(a_2, 3)$.  There are no self c-attacks; Cf. Proposition
\ref{prop_4}.
\begin{proposition}
    The following are equivalent.
    \begin{enumerate}
        \item $S_1 \subseteq S$
            c-attacks $s \in S$.
        \item $S_1 \subseteq S$
            c-attacks $s \in \pi(1, \view(S, S_1))$.
    \end{enumerate}
    Also, the following are equivalent.
    \begin{enumerate}
         \item $S_1 \subseteq S$
             c-defeats $s \in S$.
         \item $S_1 \subseteq S$
             c-defeats $s \in \pi(1, \view(S, S_1))$.
    \end{enumerate}
\end{proposition}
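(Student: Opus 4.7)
The plan is to handle both equivalences together via the decomposition $\pi(1, \view(S, S_1)) = (S \backslash S_1) \cup \alpha(S_1)$ and a case split on where the target $s$ lies, leaning on Proposition~\ref{prop_4} for the only nontrivial subcase.

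For the c-attack equivalence I would proceed as follows. In direction (2) $\Rightarrow$ (1), assume $s \in \pi(1, \view(S, S_1))$ with $S_1$ c-attacking $s$. If $s \in S \backslash S_1$ then $s \in S$ immediately. Otherwise $s \in \alpha(S_1)$; here the witness $S_2 \subseteq \alpha(S_1)$ with $(R \backslash \Delete(S, S_1))(S_2, s)$ defined is an attack in $\view(S, S_1)$ from a subset of $\alpha(S_1)$ onto an element of $\alpha(S_1)$, which Proposition~\ref{prop_4} forbids, so this subcase is vacuous. In direction (1) $\Rightarrow$ (2), assume $s \in S$ and $S_1$ c-attacks $s$. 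If $s \in S \backslash S_1$ then $s \in \pi(1, \view(S, S_1))$ directly. If $s \in S_1$, the goal is to show $s \in \alpha(S_1)$, equivalently $\Rmax(S_1, s) = 0$: from a witness $S_2 \subseteq \alpha(S_1)$ with $R(S_2, s)$ defined and $(S_2, s) \notin \Delete(S, S_1)$, I would apply \textbf{[Attack monotonicity 1]} (or its generalised form, Proposition~1) to lift elements of $S_2 \cap (\alpha(S_1) \backslash S_1)$ back to their higher-capacity counterparts in $S_1$, and then invoke \textbf{[No self attacks]} together with coherence condition~3 (unique capacity per identifier in $S$) to derive a contradiction with $\Rmax(S_1, s) > 0$.

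For the c-defeat equivalence, the plan is to reuse the same case split: by definition, c-defeat of $s$ by $S_1$ is a c-attack of $s$ by $S_1$ satisfying an additional strength bound, so the membership of $s$ in $S$ versus $\pi(1, \view(S, S_1))$ is controlled entirely by the c-attack side. Proposition~\ref{prop_4} again rules out the $s \in \alpha(S_1)$ subcase in direction (2) $\Rightarrow$ (1), and the (1) $\Rightarrow$ (2) subcase $s \in S_1$ is dispatched exactly as above, since the strength bound is inert with respect to set membership.

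The main obstacle is the (1) $\Rightarrow$ (2) subcase where $s \in S_1$ is internally-weakened: showing that no c-attack by $S_1$ can land on such an $s$ requires a careful interplay between the source-monotonicity axioms, \textbf{[No self attacks]}, and the definition of $\Delete(S, S_1)$; the remainder of the proof is bookkeeping on top of Proposition~\ref{prop_4}.
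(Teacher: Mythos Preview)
Your decomposition and case split are in the same spirit as the paper's argument, and your use of Proposition~\ref{prop_4} to dismiss the $s \in \alpha(S_1)$ subcase is exactly what is needed there. The paper's proof is more compressed: it asserts in one stroke that $\pi(2, \view(S, S_1))(S_x, s)$ is undefined whenever $S_x \subseteq S_1 \cup \alpha(S_1)$ and $s \in S_1 \cup \alpha(S_1)$, so any c-attacked $s$ lies outside both $S_1$ and $\alpha(S_1)$; since $S \backslash S_1 = \pi(1, \view(S, S_1)) \backslash \alpha(S_1)$, the two membership conditions on $s$ collapse to the same set.

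There is, however, a genuine gap in your (1) $\Rightarrow$ (2) subcase $s \in S_1$. After lifting the witness $S_2 \subseteq \alpha(S_1)$ to $S_2' \subseteq S_1$ via source monotonicity, you obtain $R(S_2', s)$ defined with $S_2' \subseteq S_1$ and $s \in S_1$. But this yields $\Rmax(S_1, s) \geq R(S_2', s) > 0$, which is \emph{consistent} with the assumption $\Rmax(S_1, s) > 0$, not a contradiction to it. \textbf{[No self attacks]} would produce a contradiction only if $s \in S_2'$, which would require the identifier of $s$ to occur in $S_2$, and nothing forces that; coherence condition~3 does not interact with definedness of $R$ in a way that closes the gap; and the fact that $(S_2', s) \in \Delete(S, S_1)$ is irrelevant because the c-attack witness is $(S_2, s)$, not $(S_2', s)$. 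The paper sidesteps this entirely: it reads $\view(S, S_1)$ as an argumentation framework in its own right, so that its attack relation speaks only about its own arguments $(S \backslash S_1) \cup \alpha(S_1)$; under that reading a target $s \in S_1 \backslash \alpha(S_1)$ is simply not an argument of the view, whence $\pi(2, \view(S, S_1))(\cdot, s)$ is undefined ``by definition'' and the subcase is vacuous for the same reason your $s \in \alpha(S_1)$ subcase was.
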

\begin{proof}
    \normalfont
    By definition,  $\pi(2, \view(S, S_1))(S_x, s)$ is
    undefined
    for any $S_x \subseteq S_1 \cup  \alpha(S_1)
    $
    and for any $s \in S_1 \cup \alpha(S_1)$.
    Hence if $S_1 \subseteq S$ c-attacks
    $s \in S$, then $s \not\in S_1$ $\andC$
    $s \not\in \alpha(S_1)$.
    Meanwhile, $(S \backslash S_1) =
    (\pi(1, \view(S, S_1)) \backslash \alpha(S_1))$.
    \end{proof}
We now define the notions of c-admissible and c-preferred sets,
which are analogous to admissible and preferred sets
we touched upon in Section 2, but which
are for conflict-eliminable sets.
One part in the definition could appear difficult
at first. We italicise  the part, and elaborate it later.
\begin{definition}[C-admissible/c-preferred sets]
    \normalfont {\ }\\
    We say that $S_1 \subseteq S$ is c-admissible
    iff $\alpha$ is defined for $S_1$ $\andC$
    {\it if $S_2 \subseteq \pi(1, \view(S, S_1))$
        attacks $s \in S_1$}
    and if, for any $S_x \subseteq S_2$ such that
    $R(S_x, s)$ is defined,
    there exists some $S_3 \subseteq \alpha(S_1)$
    such that $S_3$ c-defeat some $s_x \in S_x$.
    We say that
    $S_1 \subseteq S$ is c-preferred
    iff $S_1$ is c-admissible $\andC$
    there exists no $S_1 \subset S_y \subseteq S$
    such that $S_y$ is c-admissible.
\end{definition}
We explain why in the italicised part in the above
definition it is
$S_2 \subseteq \pi(1, \view(S, S_1))$ and not $S_2
\subseteq S$; and why it is $s \in S_1$ and not
$s \in \alpha(S_1)$.
The notion of c-admissibility
is intuitively the same as that of admissibility
in Section 2: subsets of $S_2$ are attacking
a conflict-eliminable
set $S_1$; and so $S_1$ is not admissible
unless some member of $S_1$ defeat all the attacking
subsets of $S_2$.
Now, because we are presuming that
$S_1$ is conflict-eliminable and not necessarily conflict-free,
$S_2 \subseteq S$ would include any partial
conflict in $S_1$ as an attack. However,
c-admissibility, which is the admissibility
of a conflict-eliminable set in the view of
the set,
should not be defined to defeat it. This explains
why $S_2 \subseteq \pi(1, \view(S, S_1))$ where
all those purely internal partial conflicts are compiled
away.
On the other hand, the reason that it should be
$s \in S_1$ and not
$s \in \alpha(S_1)$ is due to the asymmetry
in attacks
to and from a conflict-eliminable set. Recall \textbf{Asymmetry in attacks to and from a coalition} in Section 1. As a consequence,
$\{(a_1, 4), (a_3, 5)\}$ (and also
$\{(a_1, 4), (a_2, 3), (a_3, 5)\}$)
in Example 4 is not c-admissible because
the coalition $\{(a_1, 4), (a_3, 5)\}$ cannot
c-attack $(a_4, 1)$
while $(a_4, 1)$ attacks $(a_3, 5)$.
\subsection{Reduction to Nielsen-Parsons' argumentation
frameworks}
\begin{theorem}[Restricted $(S, R)$
    as Nielsen-Parsons']
    Let $R^{\downarrow}$ be such that
    it satisfies all but [Quasi-closure by subset relation] and [Closure by set union]. Let $S$ be such that
    for any $S_1 \subseteq S$ and for any $s \in S$,
    if
    $R^{\downarrow}$ is defined for $(S_2, s)$
    for some $S_2 \subseteq S_1$,
    then $S_1$ defeats $s$.
    Then $(S, R^{\downarrow})$ is Nielsen-Parsons'
    argumentation framework, and the following
    all
    hold good.
    \begin{enumerate}
        \item Any conflict-eliminable set
             in $S$ is a conflict-free set.
         \item If $\alpha$ is defined for $S_1
             \subseteq S$, then
             $\alpha(S_1) = S_1$ $\andC$
             $\pi(1, \view(S, S_1)) = S$.
         \item A c-attack by $S_1 \subseteq S$ on $s \in S$ is an attack by $S_1$ on $s$.
         \item A c-attack is a c-defeat.
         \item A c-admissible set is an admissible
             set, and a c-preferred set is a
             preferred set.
    \end{enumerate}
\end{theorem}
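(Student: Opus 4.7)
The plan is to identify $R^\downarrow$'s defined domain with a group-attack relation and then march through the five items, each leaning on its predecessors. I would set $G = \{(S_x, s) : R^\downarrow(S_x, s) \text{ is defined}\}$ so that $(S, G)$ has the Nielsen-Parsons signature: \textbf{[Coherence]} keeps $(\emptyset, s)$ out of $G$, and finiteness of the coherent $S$ yields $A \subseteq_{\text{fin}} \mathcal{A}$ after projecting identifiers. Item (1) then follows by contradiction: if $S_1$ is conflict-eliminable yet some subset of $S_1$ attacks $s \in S_1$, the defining hypothesis on $S$ promotes that attack into a defeat, contradicting conflict-eliminability.

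Item (2) rests on (1). A conflict-eliminable, hence conflict-free, $S_1$ has no subset in the domain of $R^\downarrow$ with target $s \in S_1$, so $\Rmax(S_1, s) = 0$ by Definition 4, and Definition 6 delivers $\alpha(S_1) = S_1$, whence $\pi(1, \view(S, S_1)) = (S \setminus S_1) \cup S_1 = S$. Items (3) and (4) then follow routinely: any c-attack witness $S_2 \subseteq \alpha(S_1) = S_1$ with $R^\downarrow$ defined on $(S_2, s)$ (necessarily $s \notin S_1$ by \textbf{[No self attacks]}) is untouched by $\Delete(S, S_1)$ and so witnesses an ordinary attack; the defining hypothesis on $S$ then upgrades that attack to a defeat, giving a c-defeat.

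The main obstacle is item (5), reconciling our universally quantified c-admissibility with Nielsen-Parsons' minimal-attacker admissibility. After unfolding via (2)-(4), c-admissibility of $S_1$ reads: for every $s \in S_1$ and every attacker $S_x \subseteq S$ of $s$, some subset of $S_1$ attacks some $s_x \in S_x$; Nielsen-Parsons asks the same only for minimal attackers. The two agree because any attacker $S_x$ contains, by finite descent inside $S_x$, a minimal attacker $S_m \subseteq S_x$, and attacking a member of $S_m$ automatically attacks a member of $S_x$; the converse is immediate since a minimal attacker is itself an attacker. The c-preferred/preferred equivalence then follows at once, since maximal c-admissible and maximal admissible subsets of $S$ now coincide.
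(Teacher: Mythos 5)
Your proposal is correct and follows essentially the same route as the paper's own (very terse) proof: attacks are promoted to defeats by hypothesis, so conflict-eliminability collapses to conflict-freeness, $\alpha$ becomes the identity, the view collapses to $S$, and the c-notions collapse to the Nielsen-Parsons ones. The only substantive content you add is the minimal-attacker reconciliation in item (5), which the paper dismisses as "trivial" but which you argue correctly via the existence of a minimal attacking subset inside any attacking set.
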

\begin{proof}  \normalfont
{\ }
\begin{enumerate}
    \item
     By definition, if $R^{\downarrow}$
    is defined for $(S_1, s)$, then $S_1$ defeats $s$.
   Hence
   it is necessary that a conflict-eliminable
   set be a conflict-free set.
   \item
    By 1., it is vacuous that $\alpha(S_1) = S_1$.
   $\Delete(S, S_1) = \emptyset$,
   and $\pi(1, \view(S, S_1)) = S$.
   \item
    By 2., both are trivial.
\item
    By definition of $(S, R^{\downarrow})$.
\item By 2., both are trivial. \end{enumerate}
With these, it is straightforward to see that
$(S, R^{\downarrow})$ is Nielsen-Parsons' argumentation
framework.
\end{proof}
The three monotonicity conditions may
be dropped, too. They are not relevant
to Nielsen-Parsons' frameworks.
 \section{Coalition profitability and formability Semantics}
It is of interest
to learn meaningful conflict-eliminable sets
of arguments. In this section we show semantic characterisations
of coalition formability out of conflict-eliminable
sets. Since coalition formation presupposes
at least two groups of arguments,
what we are characterising
is not whether a set of arguments
is admissible and how good
an admissible set over other admissible sets is, but
whether a conflict-eliminable
set can form a coalition with
another conflict-eliminable set, and
how good a coalition over other coalitions is.
We presume some conflict-eliminable set
at front, and will talk of coalition formability
semantics relative to the conflict-eliminable set.
We will first discuss profitability relation of a coalition,
will show its theoretical properties,
and will then present four coalition formability semantics,
each of which formalises certain utility postulate(s)
taking the profitability into account.
We assume the following notations.
\begin{definition}[One-directional attacks]
    \normalfont
   Let $S_1 \subseteq S$ be such that
   $\alpha(S_1)$ is defined.
    We say that $S_1$ is one-directionally
    attacked iff
    there exists
    $S_x \subseteq \pi(1, \view(S, S_1))$
             such that
             $S_x$ attacks $s \in S_1$
             $\andC$
             $S_1$ does not c-attack
             any $s_x \in S_x$.
\end{definition}
$\lkakko (a_1, 4), (a_2, 3), (a_3, 5) \rkakko$ in
the running example
in the previous section
is one-directionally attacked (by $(a_4, 1)$).
\begin{definition}[States of a conflict-eliminable
    set]
    \normalfont
    Let $\preceq: 2^\Sentence \times
    2^\Sentence$ be a binary relation
    such that
    $(S_1, S_2) \in \preceq$, written
    also $S_1 \preceq S_2$,
    iff $\alpha$ is defined both for $S_1$ and
    $S_2$ $\andC$ any of the three conditions
    below is satisfied:
    \begin{enumerate}
        \item $S_2$ is c-admissible.
        \item $S_1$ is one-directionally
            attacked.
        \item neither $S_1$ nor $S_2$
            is c-admissible
            or one-directionally attacked.
    \end{enumerate}
\end{definition}
Informally, if $S_x \subseteq S$ is c-admissible,
then it is fully defended from external attacks
and is good. If $S_x$ is one-directionally attacked,
then $S_x$ does not have
any answer to external attacks, which is bad.
Any conflict-eliminable set
that does not belong to either of them
is better than being one-directionally attacked
but is worse than being c-admissible.
Consequently, if $S_1 \preceq S_2$,
then $S_2$ is in a better state or in at least as good
a state as $S_1$.
\begin{definition}[Coalition permission]
    \normalfont
    We say that coalition
    is permitted between
    $S_1 \subseteq S$ and $S_2 \subseteq S$ iff
    $S_1 \cap S_2 = \emptyset$
    $\andC$ $\alpha$ is defined for $S_1 \cup S_2$.
\end{definition}
The following results tell that this definition of coalition permission
is not underspecified.
\begin{lemma}[Defeats are unresolvable]
    \label{mono}
    Let $S_1$ and $s$ be such that
    $S_1 \subseteq S$ $\andC$ $s \in S_1$.
    If $S_1$ defeats $s$,
    then for all $S_2$ such that
    $S_1 \subseteq S_2 \subseteq S$
    it holds that $S_2$ defeats $s$.
\end{lemma}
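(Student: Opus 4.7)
The plan is to unfold Definition~3 to extract a concrete attacking subset, and then observe that this subset is preserved when we enlarge the attacker. Concretely, from the hypothesis that $S_1$ defeats $s$, Definition~3 supplies some $S_a \subseteq S_1$ with $R(S_a, s)$ defined, and condition (2) of the defeat clause, once its universally quantified $S_x$ is instantiated with $S_a$ itself, collapses to the single inequality $\pi(2, s) \leq R(S_a, s)$.

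I would then verify that the very same $S_a$ witnesses $S_2$ defeating $s$ for an arbitrary $S_2$ with $S_1 \subseteq S_2 \subseteq S$. Applicability of the defeat definition requires $s \in S_2$, which follows from $s \in S_1 \subseteq S_2$. Since $S_a \subseteq S_1 \subseteq S_2$, the set $S_a$ remains an eligible witnessing subset. Because $R$ is a partial function on $2^{\mathcal{S}} \times \mathcal{S}$ whose value at $(S_a, s)$ is independent of whatever enclosing attacking set is under consideration, $R(S_a, s)$ is defined with exactly the same value as before. Hence $S_2$ attacks $s$ via $S_a$, and the inequality $\pi(2, s) \leq R(S_a, s)$ is inherited verbatim, so $S_2$ defeats $s$.

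I do not anticipate a substantive obstacle. The only delicate point is the slightly unusual phrasing of condition (2) of defeat; one must recognise that the conditional reduces, via the instantiation $S_x := S_a$ permitted by (1), to the single inequality $\pi(2, s) \leq R(S_a, s)$. After that, the argument appeals only to the context-independence of the partial function $R$ and to the transitivity of $\subseteq$: none of the monotonicity axioms [Attack monotonicity 1]--[3], nor the closure axioms, are needed. Should a stricter reading of condition (2) be preferred, requiring the witness to realise $\Rmax(S_1, s)$, I would instead pick a subset $S_a' \subseteq S_2$ realising $\Rmax(S_2, s)$ and observe $\Rmax(S_2, s) \geq \Rmax(S_1, s) \geq \pi(2, s)$ since $S_a$ is still available within $S_2$; the conclusion is unchanged.
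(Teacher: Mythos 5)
Your proof is correct. The paper's own proof is a one-liner, ``By [Attack strength monotonicity] of $R$'', which loosely gestures at the monotonicity axioms; your argument makes explicit what is actually doing the work, namely that the defeat condition is existential over attacking subsets and that any witness $S_a \subseteq S_1$ with $R(S_a,s) \geq \pi(2,s)$ survives unchanged as a witness for any $S_2 \supseteq S_1$, since $R(S_a,s)$ depends only on the pair $(S_a,s)$ and not on the enclosing set. You are right that none of [Attack monotonicity 1]--[3] is genuinely needed under the literal reading of Definition~3, and your fallback for the stricter ``maximum-realising'' reading (that $\Rmax(S_2,s) \geq \Rmax(S_1,s)$ because the collection of attacking subsets only grows, with the maximum existing by finiteness of $S$) also goes through. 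So your route is, if anything, more elementary and more precisely justified than the paper's citation of a monotonicity axiom; the two proofs reach the same conclusion by the same underlying observation, but yours names the correct reason.
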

\begin{proof}  \normalfont
    By [Attack strength monotonicity] of $R$.
\end{proof}
\begin{proposition}[Coalition and conflict-eliminability]
    \label{conflict_eliminable}
    If coalition is permitted between $S_1$ and
    $S_2$, then $\alpha$
    is necessarily defined for $S_1$ and $S_2$.
\end{proposition}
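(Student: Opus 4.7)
The plan is to unfold the two definitions involved and then reduce the statement to a direct application of Lemma~\ref{mono}. By Definition 10 (Coalition permission), the hypothesis gives us that $\alpha$ is defined for $S_1 \cup S_2$, which by Definition 5 (the domain clause of $\alpha$) means that $S_1 \cup S_2$ is conflict-eliminable, i.e.\ no $s \in S_1 \cup S_2$ is defeated by $S_1 \cup S_2$. The goal is to show that $\alpha$ is defined for each of $S_1$ and $S_2$ individually, which is the same as showing that each of them is conflict-eliminable.

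I would proceed by contradiction, handling $S_1$ first (the case for $S_2$ is symmetric). Suppose $S_1$ is not conflict-eliminable. Then by Definition 4 there is some $s \in S_1$ such that $S_1$ defeats $s$. Since $S_1 \subseteq S_1 \cup S_2 \subseteq S$, Lemma~\ref{mono} applies with the superset $S_1 \cup S_2$, giving that $S_1 \cup S_2$ also defeats $s$. But $s \in S_1 \subseteq S_1 \cup S_2$, so this exhibits a member of $S_1 \cup S_2$ defeated by $S_1 \cup S_2$, contradicting conflict-eliminability of $S_1 \cup S_2$, hence contradicting the assumption that $\alpha$ is defined for $S_1 \cup S_2$. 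Exchanging the roles of $S_1$ and $S_2$ yields conflict-eliminability of $S_2$.

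There is no real obstacle here: the whole argument is a one-line application of the monotonicity of defeats established in Lemma~\ref{mono}, together with unfolding of Definitions 4, 5, and 10. The only point worth stating carefully is that Lemma~\ref{mono} is precisely tailored to this use, as it guarantees that an internal defeat cannot be repaired by enlarging the ambient set; the disjointness condition $S_1 \cap S_2 = \emptyset$ from Definition 10 is actually not needed for this proposition, but it does no harm.
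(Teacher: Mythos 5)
Your proof is correct and takes essentially the same route as the paper's: assume one of $S_1, S_2$ is not conflict-eliminable, extract a defeated $s \in S_i$, and apply Lemma~\ref{mono} to propagate the defeat to the superset $S_1 \cup S_2$, contradicting the conflict-eliminability guaranteed by the coalition-permission hypothesis. Your additional observation that the disjointness condition is not needed is accurate but incidental.
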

\begin{proof}  \normalfont
    Suppose otherwise, then
    by conflict-eliminability
    of a set,
    there must be an argument
    in $S_i$, $i \in \{1,2\}$,
    such that $S_i$ defeats $s \in S_i$.
    Apply Lemma \ref{mono}.
\end{proof}
\indent We make one notion formally explicit for convenience,
and then define coalition profitability.
\begin{definition}[Attackers]
    \normalfont
    Let ${\Attacker: 2^{\Sentence} \rightarrow
        2^{\Sentence}}$ be such that\linebreak
 {\small $\Attacker(S_1) =
    \{s \in S \ | \
        \text{\small there exists
            some $s_1 \in S_1$
            such that $s$ }\linebreak
        \text{attacks $s_1$.}\}$}.
    We say that
    $\Attacker(S_1)$ is
    the set of attackers
    to $S_1$.
\end{definition}
\begin{definition}[Coalition profitability]
    \normalfont
    Let
$\unlhd: 2^{\Sentence} \times
2^{\Sentence}$ be such that if
$S_1 \unlhd S_2$ (or $(S_1, S_2) \in \unlhd$),
then three axioms below are all satisfied.
\begin{enumerate}
\item
    $S_1 \subseteq S_2$
         (larger set).\footnote{As illustrated
         in Introduction, our coalition scenario
         has this property.}
    %\item If $S_1 \unlhd S_2$, then
    %     $S_1$ and $S_2$ are conflict-eliminable
    %     (conflict-eliminability). \\
    %     {\it Explanation}: $\unlhd$
    %     judges quality of a conflict-eliminable
    %     set. Hence both operands should be
    %     conflict-eliminable.
    \item
        $S_1 \preceq S_2$ (better state).
            \item
                $|\{s \in \Attacker(S_1) \ | \
                    \text{$S_1$ does not
                        c-defeat $s$ $\andC$
                    $s \not\in S_1$}\}|
                    \geq \\
                    |\{s \in \Attacker(S_1) \ | \
                        \text{$S_2$ does not
                            c-defeat $s$ $\andC$
                            $s \not\in S_2$
                        }\}|$
                    (fewer attackers).
            \end{enumerate}
\end{definition}
We say that $S_2$ is profitable for $S_1$ iff
$S_1 \unlhd S_2$.
Due to (better state), $S_1 \unlhd S_2$
implies that $\alpha$ is defined both for $S_1$ and $S_2$.
By (larger set), a set that contains more arguments
is a better set. By (better state),
a set that is in a better state is a better set.
Finally, by (fewer attackers),  a set that is attacked by
         a smaller number of arguments
         is a better set. The $S_1$
         in $\Attacker(S_1)$ on the second line
         is not a typo: this criterion is
         for measuring the profits
         of coalition formation
         for $S_1$.
    We have $\lkakko (a_1, 4), (a_3, 5) \rkakko
    \unlhd \lkakko (a_1, 4), (a_2, 3), (a_3, 5)
    \rkakko$  in Example 4.
\begin{proposition}
        A $(S, R)$ can be chosen in
    such a way that
    a pair of $S_1 \subseteq S$ and $S_2
    \subseteq S$
    do not
    satisfy more than one axioms of $\unlhd$.
\end{proposition}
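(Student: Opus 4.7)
The plan is to establish the proposition by explicit construction, showing that the three axioms of $\unlhd$ (larger set, better state, fewer attackers) are logically independent. Concretely, I would exhibit one (or three) small argumentation framework $(S, R)$ together with pairs $(S_1, S_2)$ in which a chosen axiom is satisfied while the remaining two fail. This gives the stronger reading of the statement, from which the literal reading (a pair that satisfies no more than one axiom) follows immediately. The general strategy is to fix a small ground set of argument identifiers, choose the capacities, and then specify $R$ only on minimal attacker sets, extending upward on supersets in the unique manner compatible with \textbf{[Closure by set union]} and \textbf{[Attack monotonicity 2 (source)]} so as to keep the attack pattern as simple as possible.

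For \emph{larger set} alone, I would take $S_1 \subsetneq S_2$ where $S_2$ is obtained by adding one fresh argument $s^\star$; $s^\star$ is chosen so that (i) it is attacked by a new external argument $t'$ not belonging to $\Attacker(S_1)$, making $S_2$ one-directionally attacked while $S_1$ stays c-admissible (violating axiom~2), and (ii) the addition of $s^\star$ strictly increases the count of undefeated attackers in $\Attacker(S_1)$'s measurement on $S_2$ (violating axiom~3). For \emph{better state} alone, I would take $S_1$ and $S_2$ disjoint so that $S_1 \not\subseteq S_2$ (violating axiom~1), with $S_2$ being c-admissible so that $S_1 \preceq S_2$ holds via the first clause of Definition~11; the capacities and attack strengths are chosen so that intrinsic arguments of $S_2$ do not c-defeat any member of $\Attacker(S_1)$ (violating axiom~3). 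For \emph{fewer attackers} alone, I would take $S_1$ c-admissible with $S_1 \not\subseteq S_2$ and arrange $S_2$ to be neither c-admissible nor one-directionally attacked (so the second clause of $\preceq$ fails and axiom~2 is violated), while still letting $\alpha(S_2)$ c-defeat strictly more attackers of $S_1$ than $\alpha(S_1)$ does (so axiom~3 holds).

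The main obstacle will be coordinating the eight axioms on $R$ with the three state labels required on $(S_1, S_2)$. The monotonicity conditions, in particular \textbf{[Attack monotonicity 1 (source)]} and \textbf{[Attack monotonicity 3 (target)]}, constrain how attack strengths behave as capacities vary, and \textbf{[Quasi-closure by subset relation]} together with \textbf{[Closure by set union]} forces attack-defined subsets/supersets to propagate; once I declare minimal attacks, all derived ones must be checked for consistency. I would handle this by (a) picking capacities large enough that no unintended defeat ever arises, (b) declaring $R$ only on singleton attacker sets where possible and then taking its closure under union, and (c) computing $\alpha(S_1)$, $\alpha(S_2)$ and the views $\view(S,S_i)$ directly from Definitions~6--7 to verify the c-admissibility and one-directional-attack labels. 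Once one explicit example per axiom is checked, the conclusion follows.
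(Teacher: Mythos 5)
Your proposal is correct and takes essentially the same approach as the paper: three explicit constructions, one per axiom, each exhibiting a pair $(S_1,S_2)$ that satisfies that axiom while violating the other two, with $R$ declared on minimal attacker sets and checked against the eight conditions. The one detail to pin down is that in your (larger set) case the failure of (fewer attackers) cannot come from the new external attacker $t'$ of $s^\star$ (both counts range over $\Attacker(S_1)$, not $\Attacker(S_2)$); it must instead come from the coalition's intrinsic arguments being weakened by internal partial conflicts so that $S_1\cup\{s^\star\}$ no longer c-defeats an attacker that $S_1$ alone c-defeats, which is precisely the mechanism the paper's first construction uses.
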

\begin{proof}  \normalfont
    \begin{description}
        \item[(larger set)] {\ }\\
\includegraphics[scale=.32]{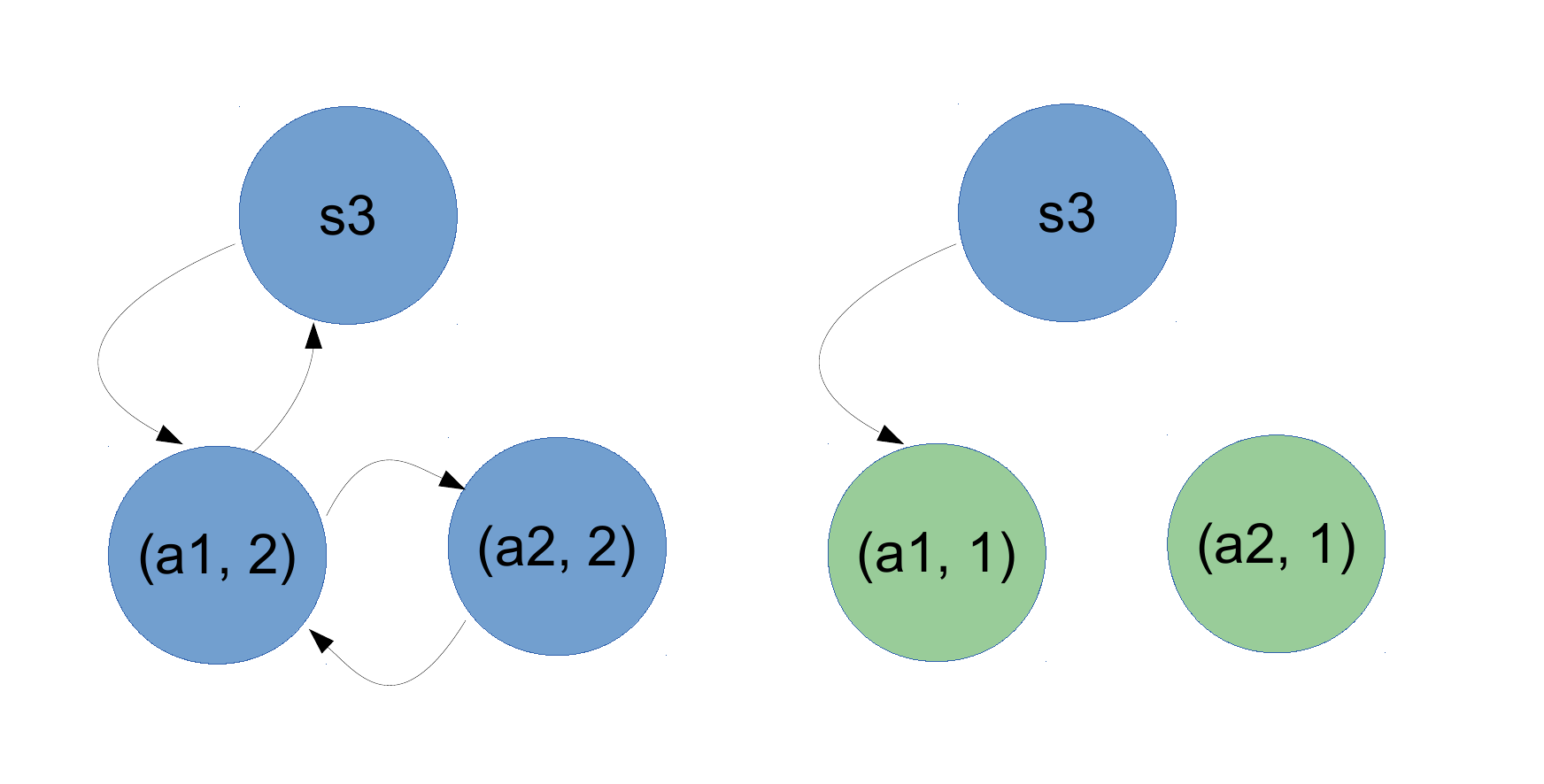} \\
             Let $S$ be
             $\{(a_1, 2), (a_2, 2), s_3\}$,
             and let $R$ be such that
             it is defined only for any combination that
             matches the attack arrows in
             the two drawings above.
                          Let us define that\linebreak
             $R(\{(a_1, 2)\}, (a_2, 2)) =
             R(\{(a_2, 2)\}, (a_1, 2)) =
             R(\{s_3\}, (a_1, 2)) =
             R(\{s_3\}, (a_1, 1)) =
             R(\{(a_2, 2), s_3\}, (a_1, 2)) = 1$,
             and that
             $R(\{(a_1, 2)\}, s_3) \geq \pi(2, s_3)$
             (among others implicit by the conditions of $R$).
             Then the left and the right drawings
             represent $(S, R)$ and respectively
             $\view(S, \{(a_1, 2),
                 (a_2, 2)\})$.
             Now, let $S_1$ be $\{(a_1, 2)\}$
             and let $S_2$ be
             $\{(a_1, 2), (a_2, 2)\}$.
                 Then
             clearly $S_1 \subseteq S_2$.
             However, it does not satisfy
             (better state):
             $S_1$ is neither c-admissible
             nor one-directionally attacked
             but $S_2$ is one-directionally attacked.
             It does not satisfy
             (fewer attackers), either: $\Attacker(S_1)
             =\{s_3\}$,
             $S_1$ c-defeats $s_3$, and
             $S_2$ c-defeats none.
         \item[(better state)]   {\ }\\
\includegraphics[scale=.32]{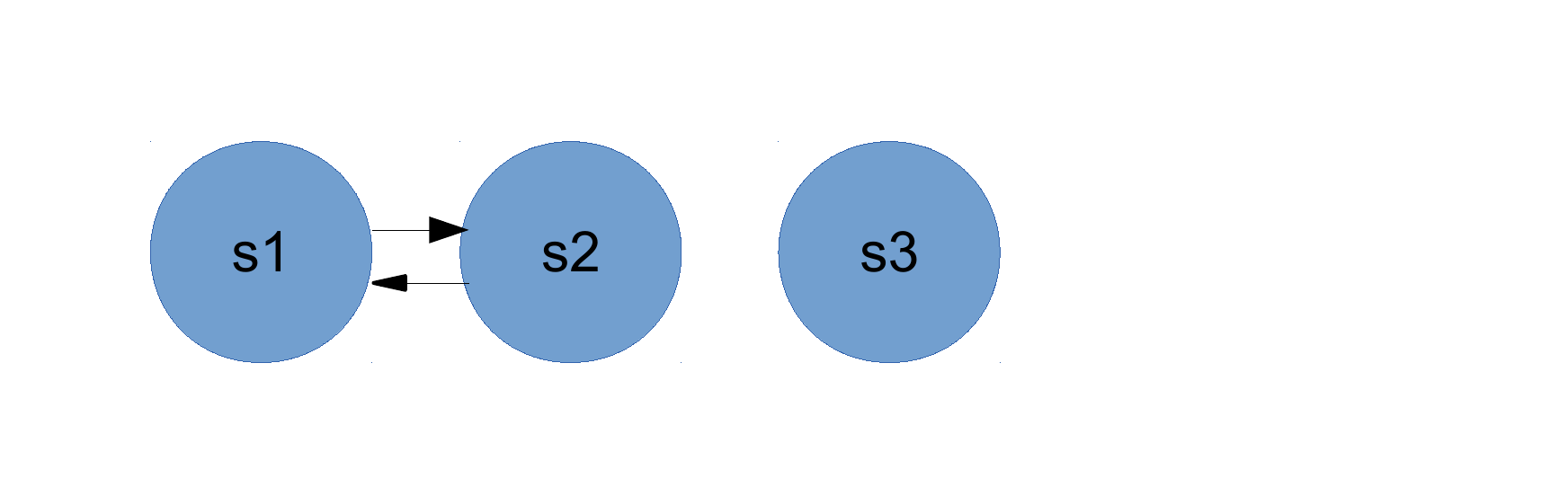} \\
             Let $S$ be $\{s_1, s_2, s_3\}$,
             and let $R$ be such that
             it is defined only mutually for
             $s_1$ and $s_2$ as shown above,
             and such that $R(\{s_1\}, s_2) \geq \pi(2, s_2)$
             $\andC$
             $R(\{s_2\}, s_1) \geq \pi(2, s_1)$.
             Now, let $S_1$ be $\{s_1\}$ and
             let $S_2$ be $\{s_3\}$.
             Then $S_1 \preceq S_2$ because
             $S_2$ is c-admissible. However,
             clearly it is not the case that
             $S_1 \subseteq S_2$.
             Also, $\Attacker(S_1) = \{s_2\}$,
             $S_1$ c-defeats $s_2$,
             but $S_2$ does not c-defeat it.
         \item[(fewer attackers)]
             Let $S$ be $\{s_1, s_2, s_3\}$,
             and let $R$ be such that
             it is defined as is previously, and such that
             $R(\{s_1\}, s_2) < \pi(2, s_2)$
             $\andC$
             $R(\{s_2\}, s_1) < \pi(2, s_1)$.
             Now,
             let $S_1$ be $\{s_3\}$,
             and let $S_2$ be $\{s_2\}$.
             Then $\Attacker(S_1) = \emptyset$,
             and so (fewer attackers) is satisfied
             trivially.
             However, clearly it is not the case
             that $S_1 \subseteq S_2$.
             And it is not the case that
             $S_1 \preceq S_2$, because
             $S_1$ is c-admissible,
             while $S_2$ is neither c-admissible
             nor one-directionally attacked.
    \end{description}
\end{proof}
\subsection{Theoretical results around profitability}
 It is easy to see that there exists some $(S, R)$
and some $S_1, S_2 \subseteq S$
such that $S_1 \unlhd S_1 \cup S_2$. We state
other results.
\begin{theorem}
    \label{thm_1}
    Let $S_1 \subseteq S$ be such that
    $\alpha(S_1)$ is defined,
    and let $S_x \subseteq S$ be a c-admissible set.
    If  $S_1 \subset S_x$, then
    the following all hold good.
    \begin{enumerate}
        \item $\alpha$ is defined for $S_2 = S_x \backslash S_1$.
        \item Coalition is permitted
            between $S_1$ and $S_2$.
        \item $S_1 \unlhd S_x$.
    \end{enumerate}
\end{theorem}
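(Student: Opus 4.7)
The plan is to prove the three items in order, relying on c-admissibility of $S_x$ throughout. For item 1, I would argue by contradiction. Since $S_x$ is c-admissible, $\alpha(S_x)$ is defined, so $S_x$ is conflict-eliminable. If $\alpha$ were undefined on $S_2 = S_x \setminus S_1$, then some $s \in S_2$ would be defeated by $S_2$; since $S_2 \subseteq S_x$, Lemma~\ref{mono} then yields that $S_x$ also defeats $s \in S_x$, contradicting conflict-eliminability of $S_x$. For item 2, the two coalition-permission requirements are immediate: $S_1 \cap S_2 = \emptyset$ by construction, and $\alpha(S_1 \cup S_2) = \alpha(S_x)$ is defined.

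For item 3, I verify the three axioms of $\unlhd$ in turn. The (larger set) axiom is immediate from the hypothesis $S_1 \subset S_x$. The (better state) axiom $S_1 \preceq S_x$ follows from the first clause of the definition of $\preceq$, since $S_x$ is c-admissible and $\alpha$ is defined both for $S_1$ (by hypothesis) and for $S_x$ (through c-admissibility).

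The crux is the (fewer attackers) axiom. My plan is to show its right-hand cardinality is zero by proving that $B = \{s \in \Attacker(S_1) \mid S_x \text{ does not c-defeat } s\ \andC\ s \notin S_x\}$ is empty. Suppose for contradiction that $s \in B$. By the definition of $\Attacker$ and [Coherence], some $s_1 \in S_1 \subseteq S_x$ satisfies $R(\{s\}, s_1)$ being defined. Since $s \notin S_x$ while $s \in S$, we have $s \in S \setminus S_x \subseteq \pi(1, \view(S, S_x))$, so $\{s\}$ attacks $s_1 \in S_x$ in the view. Applying c-admissibility of $S_x$ to this attack with $S_y = \{s\}$ yields $S_3 \subseteq \alpha(S_x)$ c-defeating $s$, which by the definition of c-defeat means $S_x$ itself c-defeats $s$, contradicting $s \in B$. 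The main obstacle here is this last step: the c-admissibility definition informally speaks of $S_3 \subseteq \alpha(S_x)$ c-defeating, even though $S_3$ is not itself a subset of $S$ and the formal c-defeat predicate is indexed by such subsets. The intended reading, through the attack relation $\pi(2, \view(S, S_x))$, makes the $S_3$ witness lift directly to $S_x$ c-defeating $s$ in the formal sense, which closes the argument.
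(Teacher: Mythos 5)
Your proposal is correct and follows essentially the same route as the paper's proof: contradiction via Lemma~\ref{mono} for item~1, the set-theoretic observations for item~2, and for item~3 the verification that (larger set) and (better state) are immediate while the right-hand cardinality in (fewer attackers) is $0$. The paper merely asserts that this cardinality is $0$; your expansion of why c-admissibility of $S_x$ forces every external attacker of $S_1$ to be c-defeated by $S_x$ (including the remark about lifting the $S_3 \subseteq \alpha(S_x)$ witness to a c-defeat by $S_x$) is a faithful filling-in of that step.
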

\begin{proof} \normalfont {\ }
    \begin{enumerate}
        \item Suppose otherwise,
            then $S_2$ would defeat at least
            one $s \in S_2$ if
            $S_2$  defeats $s$.
            By lemma \ref{mono},
            $S_x$ would then defeat $s$, for
            $S_2 \subseteq S_x$.
            But $S_x$, being  c-admissible,
            does not defeat $s$.
        \item $\alpha$ is defined for $S_x$ by definition; and $S_1 \cap S_2 = \emptyset$,
            also by definition.
        \item ${S_1 \subseteq S_x}$,
             $S_x$ is a c-admissible set,
             and\\
             $|\{{s \in \Attacker(S_1)} \ | \
                 \text{$S_x$ does not c-defeat
                     $s$ $\andC$ $s \not\in S_x$}\}| = 0$.
    \end{enumerate}
\end{proof}
 \begin{theorem}[Existence theorem]
    If, for any $S_1 \subseteq S$,
    there exists some $S_2 \subseteq S$
    such that coalition is permitted between
    $S_1$ and $S_2$ $\andC$
    $S_1 \unlhd S_1 \cup S_2$
    $\andC$ $S_1 \cup S_2$ is c-admissible,
     then there exists
     some $S_3 \subseteq S$ such that
     coalition is permitted between
     $S_1$ and $S_3$ $\andC$
     $S_1 \unlhd S_1 \cup S_3$
     $\andC$ $S_1 \cup S_3$ is c-preferred
     $\andC$ $S_2 \subseteq S_3
     $.
 \end{theorem}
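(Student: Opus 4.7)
The plan is to enlarge the hypothesised $S_2$ to a maximal c-admissible extension of $S_1 \cup S_2$ inside $S$, then peel $S_1$ off to obtain the required $S_3$. First, using the hypothesis, fix $S_2 \subseteq S$ so that coalition is permitted between $S_1$ and $S_2$, $S_1 \unlhd S_1 \cup S_2$ and $S_1 \cup S_2$ is c-admissible. Consider the family $\mathcal{F}$ of c-admissible subsets of $S$ containing $S_1 \cup S_2$. Because coherent sets of arguments are finite and $\mathcal{F}$ is nonempty, $\mathcal{F}$ has a $\subseteq$-maximal element $T^{\ast}$; by Definition 9, maximality of $T^{\ast}$ as a c-admissible subset of $S$ makes it c-preferred.

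Next, set $S_3 := T^{\ast} \setminus S_1$. From $S_1 \cap S_2 = \emptyset$ (coalition permission in the hypothesis) and $S_2 \subseteq T^{\ast}$ I obtain $S_2 \subseteq S_3$. By construction $S_1 \cap S_3 = \emptyset$ and $S_1 \cup S_3 = T^{\ast}$, and c-admissibility of $T^{\ast}$ gives that $\alpha(S_1 \cup S_3)$ is defined, so coalition is permitted between $S_1$ and $S_3$ in the sense of Definition 11. Moreover $S_1 \cup S_3 = T^{\ast}$ is c-preferred, which covers the c-preferred clause of the conclusion.

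For $S_1 \unlhd S_1 \cup S_3 = T^{\ast}$ I would split on whether the inclusion $S_1 \subseteq T^{\ast}$ is strict. In the strict case, Theorem \ref{thm_1}(3) applied to the c-admissible set $T^{\ast}$ properly containing $S_1$ yields $S_1 \unlhd T^{\ast}$ directly. In the degenerate case $S_1 = T^{\ast}$, I would verify the three axioms of $\unlhd$ by hand: (larger set) is reflexivity of $\subseteq$; (fewer attackers) is a trivial equality; and (better state) reduces to $S_1 \preceq S_1$, which holds by clause 1 of Definition 12 because $S_1 = T^{\ast}$ is c-admissible.

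The only delicate point is this last boundary case: nothing in the hypothesis forbids $S_2$ from being empty, which would make Theorem \ref{thm_1} inapplicable because its inclusion $S_1 \subset S_x$ is strict. Everything else is a routine maximality argument in a finite poset, combined with bookkeeping against Definitions 11, 12 and 14.
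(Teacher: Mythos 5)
Your proof is correct and follows essentially the same route as the paper: extend $S_1 \cup S_2$ to a $\subseteq$-maximal (hence c-preferred) c-admissible set $T^{\ast}$ using finiteness of $S$, set $S_3 = T^{\ast} \setminus S_1$, and invoke Theorem 2 for $S_1 \unlhd S_1 \cup S_3$. Your explicit handling of the degenerate case $S_1 = T^{\ast}$ (forced when $S_2 = \emptyset$), where Theorem 2's strict-inclusion hypothesis fails and the axioms of $\unlhd$ must be checked directly, is a small point of rigor that the paper's one-line appeal to Theorem 2 silently skips.
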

 \begin{proof}  \normalfont
     As $S$ is of a finite size,
     it is straightforward to see
     that if $S_1 \cup S_2$
     is c-admissible, then
     there must exist some c-preferred set $S_x$
     such that $S_1 \cup S_2 \subseteq S_x$.
     We show that the choice of $S_x \backslash S_1$
     for $S_3$ ensures the requirement
     to be fulfilled. By Theorem \ref{thm_1}
     coalition is permitted between $S_1$ and $S_3$
     $\andC$ $S_1 \unlhd S_1 \cup S_3$.
     By assumption, $S_1 \cup S_3 = S_x$
     is c-preferred $\andC$ $S_2 \subseteq S_3$.
 \end{proof}

 \begin{theorem}[Mutually maximal coalition]
    \label{best_coalition}
   Let $S_1 \subseteq S$ be such that
   $\alpha(S_1)$ is defined, and let $\Pref(S_1)$ be the set
   of all c-preferred sets that contain
   $S_1$ as their subset. If $\Pref(S_1) \not= \emptyset$,
   then the following holds good:
   for any $S_x \in \Pref(S_1)$,
   $S_1 \unlhd S_x$ $\andC$
   $(S_x \backslash S_1) \unlhd S_x$ $\andC$
   there exists no $S_x \subset S_y \subseteq S$
   such that $S_1 \unlhd S_y$ or such that
   $(S_x \backslash S_1) \unlhd S_y$.
   %if there exists no $S_y \in \Pref(S_1)$
   %such that $|S_x| < |S_y|$, then
   %there exists no larger set $S_z \subseteq S$
   %than $S_x$ such that $S_1 \unlhd S_z$.
\end{theorem}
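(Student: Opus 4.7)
My plan is to derive (1) and (2) as immediate corollaries of Theorem~\ref{thm_1} and to prove (3) by contradiction together with a brief case split on the status of $S_1$ (and symmetrically $S_x \setminus S_1$) with respect to $\preceq$. For (1), any $S_x \in \Pref(S_1)$ is c-admissible and contains $S_1$, so Theorem~\ref{thm_1}(3) delivers $S_1 \unlhd S_x$ directly, with the degenerate $S_1 = S_x$ making each axiom of $\unlhd$ trivially self-satisfied. For (2), I would first observe that $\alpha(S_x \setminus S_1)$ is defined: any defeat of an element of $S_x \setminus S_1$ by $S_x \setminus S_1$ would, by Lemma~\ref{mono}, lift to a defeat inside the c-admissible $S_x$, a contradiction. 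Theorem~\ref{thm_1} applied with $S_x \setminus S_1$ in place of $S_1$ then yields $(S_x \setminus S_1) \unlhd S_x$; the corner cases $S_1 = \emptyset$ and $S_1 = S_x$ are handled by direct inspection of the axioms of $\unlhd$.

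For (3), I would suppose toward contradiction some $S_y$ with $S_x \subsetneq S_y \subseteq S$ and $S_1 \unlhd S_y$; the claim about $(S_x \setminus S_1) \unlhd S_y$ will follow by the symmetric argument after renaming. The (better state) axiom forces $\alpha(S_y)$ to be defined, so $S_y$ is conflict-eliminable; and since $S_x$ is c-preferred while $S_y$ is a strict superset, $S_y$ cannot be c-admissible. If $S_1$ is c-admissible, then $S_1$ cannot be one-directionally attacked (c-admissibility supplies a c-defeat, and hence a c-attack, on every singleton attacker), so $S_1 \preceq S_y$ can only be satisfied via condition~1 of $\preceq$, demanding $S_y$ to be c-admissible -- contradiction. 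If $S_1$ is not c-admissible, I would exploit the c-admissibility failure of $S_y$ to pick out a singleton attacker $s' \notin S_y$ of some $s \in S_y$ that $\alpha(S_y)$ fails to c-defeat; when $s \in S_1$, this $s' \in \Attacker(S_1) \setminus S_y$ lands on the right-hand side of (fewer attackers), and [Attack monotonicity~2 (source)] lets me compare the capacities in $\alpha(S_1)$ and $\alpha(S_y)$ to ensure $s'$ is not simultaneously on the left-hand side, yielding a strict violation. The residual case $s \in S_y \setminus S_1$ is handled by running the symmetric argument against $(S_x \setminus S_1) \unlhd S_y$ to rule out the other disjunct in the theorem's statement.

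The main obstacle I anticipate is precisely this last sub-case: converting a c-admissibility failure of $S_y$ that is localized in $S_y \setminus S_1$ into an honest violation of (fewer attackers) for $S_1$. The delicacy is that $\alpha(S_y)$ carries strictly weaker capacities than $\alpha(S_x)$ does (by [Attack monotonicity~2 (source)] applied between $S_x$ and $S_y$), so c-defeats available to $S_x$ may silently vanish in $S_y$; I would have to use the monotonicity axioms on $R$ to isolate a single attacker of $S_1$ whose c-defeat is genuinely lost between $S_1$ and $S_y$, and then verify that it is absent from the left-hand count, so that the two sides of (fewer attackers) genuinely separate and contradict $S_1 \unlhd S_y$.
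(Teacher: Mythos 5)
Your handling of the first two conjuncts is correct and coincides with the paper's route: both are instances of Theorem~\ref{thm_1}, whose part~1 already gives $\alpha(S_x \backslash S_1)$ defined, so your separate appeal to Lemma~\ref{mono} is redundant but harmless. For the third conjunct the paper's entire argument is ``by the definition of a set being c-preferred'': no strict superset $S_y$ of $S_x$ is c-admissible, so condition~1 of $\preceq$ is unavailable and (better state) is read as failing. Your first sub-case --- $S_1$ c-admissible, hence not one-directionally attacked, hence $S_1 \preceq S_y$ forced through condition~1 --- is exactly the situation in which that reading is airtight, and you prove it correctly.

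The gap is the remaining sub-case, and the plan you sketch for it will not close. If $S_1$ is one-directionally attacked, condition~2 of $\preceq$ makes $S_1 \preceq S_y$ hold for \emph{every} conflict-eliminable $S_y \supseteq S_1$, so (better state) offers nothing to contradict; and if $S_1$ is in the middle tier, condition~3 can still go through when $S_y$ is likewise neither c-admissible nor one-directionally attacked. You are then left to refute (fewer attackers), but the c-admissibility failure of $S_y$ can be localized entirely in $S_y \backslash S_x$, among arguments that are not members of $\Attacker(S_1)$ at all, whereas both cardinalities in (fewer attackers) range only over $\Attacker(S_1)$; no amount of monotonicity reasoning on $R$ transports the former into the latter. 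Concretely, take $s_0$ attacking $s_1$, $s_2$ defeating $s_0$, $S_1 = \{s_1\}$ (one-directionally attacked), $S_x \in \Pref(S_1)$ containing $s_2$, and $S_y = S_x \cup \{s_3\}$ where $s_3$ is defeated by an external argument that touches nothing else: then $\alpha(S_y)$ is defined, $S_1 \preceq S_y$ by condition~2, $S_1 \subseteq S_y$, and $S_y$ still c-defeats $s_0$, so all three axioms of $\unlhd$ are satisfied by $(S_1, S_y)$. So the obstacle you flag at the end is not a technical delicacy to be finessed: in this sub-case the conclusion does not follow from the definitions, and the paper's one-line proof silently assumes it away. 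A complete proof needs either the extra hypothesis that $S_1$ and $S_x \backslash S_1$ are themselves c-admissible (your first sub-case), or a strengthening of $\preceq$ so that comparison with a non-c-admissible strict superset of a c-preferred set always fails.
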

\begin{proof} \normalfont
    By Theorem \ref{thm_1} and
    by the definition of a set being c-preferable.
\end{proof}
%Hence for any conflict-eliminable set $S_1$,
%so long as it is a subset of a c-admissible set $S_x$,
%there exists a c-preferred set $S_y$ such that
%$S_x \subseteq S_y$, and
%forming a coalition with $S_2 = S_y \backslash S_1$
%is maximally profitable for $S_1$. Furthermore,
%the same coalition is maximally profitable
%also for $S_2$.
In general, though,
the mutual profitability is not a guaranteed property.
\begin{theorem}[Asymmetry of profitabilities]
    \label{asymmetry}
     There exists an argumentation framework $(S, R)$
     with disjoint subsets:
     $S_1$ and $S_2$, of $S$ satisfying
     $S_1 \unlhd S_1 \cup S_2$ but
     not satisfying
      $S_2 \unlhd S_1 \cup S_2$.
 \end{theorem}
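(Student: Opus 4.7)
My plan is to exhibit a concrete three-argument witness. Take $S = \{s_1, s_2, s_3\}$ and define $R$ so that the only attack is by $\{s_3\}$ on $s_1$; the specific capacities and attack strengths are irrelevant beyond the positivity and monotonicity axioms. Set $S_1 = \{s_1\}$ and $S_2 = \{s_2\}$, so $S_1 \cap S_2 = \emptyset$ and $\alpha$ is defined on $S_1$, $S_2$, and $S_1 \cup S_2$ because none of these three sets carries any internal attack; in particular $\alpha(S_1 \cup S_2) = S_1 \cup S_2$.

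For the positive side $S_1 \unlhd S_1 \cup S_2$ I would run through the three axioms of Definition 11. The larger-set axiom is immediate. Better state holds because $S_1$ is one-directionally attacked by $\{s_3\}$, which alone triggers clause 2 of $\preceq$, irrespective of the state of the coalition. Fewer attackers holds because $\Attacker(S_1) = \{s_3\}$ and neither $S_1$ nor $S_1 \cup S_2$ contains an argument c-attacking $s_3$, so both counts on the two sides of the inequality equal $1$.

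For the negative side, the plan is to falsify only the better-state axiom and leave the other two alone. Since $s_2$ has no attackers, $S_2$ is vacuously c-admissible and is not one-directionally attacked. The coalition $S_1 \cup S_2$, by contrast, is not c-admissible (the attack from $s_3$ on $s_1$ is not c-defended), and inspecting $\view(S, S_1 \cup S_2)$ confirms that the coalition is itself one-directionally attacked, because nothing in $\alpha(S_1 \cup S_2) = \{s_1, s_2\}$ attacks $s_3$. Matching against the three clauses of $\preceq$: clause 1 fails (coalition not c-admissible), clause 2 fails ($S_2$ not one-directionally attacked), and clause 3 fails ($S_2$ is already c-admissible). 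Hence $S_2 \preceq S_1 \cup S_2$ does not hold, so $S_2 \unlhd S_1 \cup S_2$ fails.

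The only genuinely delicate step is confirming that in the view $\view(S, S_1 \cup S_2)$ the coalition really has no c-attack on $s_3$, which is where the definition of intrinsic arguments interacts with the asymmetric reading of attacks to and from a coalition; once this is pinned down, the remainder is a direct unfolding of the definitions of $\preceq$ and $\unlhd$, and this bookkeeping against the three $\preceq$ clauses is the main obstacle of the proof.
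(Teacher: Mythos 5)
Your construction is correct, and it is a genuinely different witness from the paper's. The paper uses a five-argument framework with a genuine internal conflict (the capacity of $a_2$ drops from $2$ to $1$ inside the coalition, so the intrinsic argument $(a_2,1)$ can no longer c-defeat $s_4$ and $s_5$), and the asymmetry is produced by falsifying the \emph{(fewer attackers)} axiom for $S_2$ while both $S_1$ and $S_1\cup S_2$ sit in the same intermediate state. You instead falsify \emph{(better state)}: $S_2=\{s_2\}$ is vacuously c-admissible, the coalition $\{s_1,s_2\}$ is not c-admissible and is one-directionally attacked, and none of the three clauses of $\preceq$ applies, while on the positive side $S_1=\{s_1\}$ is itself one-directionally attacked so clause 2 of $\preceq$ fires for free and the attacker counts are $1\geq 1$. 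All of this checks out against the definitions (in particular $\alpha$ is the identity on all three sets, so the delicate view/intrinsic-argument bookkeeping you flag is trivial here). Your example is more elementary and needs no capacity arithmetic; the paper's example buys something extra, namely that the asymmetry can arise purely from the quantitative attacker-count criterion even when the state criterion treats both parties symmetrically, which your three-argument framework cannot exhibit. One presentational caveat applies equally to both proofs: Definition 14 only states the three axioms as necessary conditions for $S_1\unlhd S_2$, so concluding $S_1\unlhd S_1\cup S_2$ from verifying them relies on reading the definition as an iff, as the paper itself does.
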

\begin{center}
     \includegraphics[scale=.32]{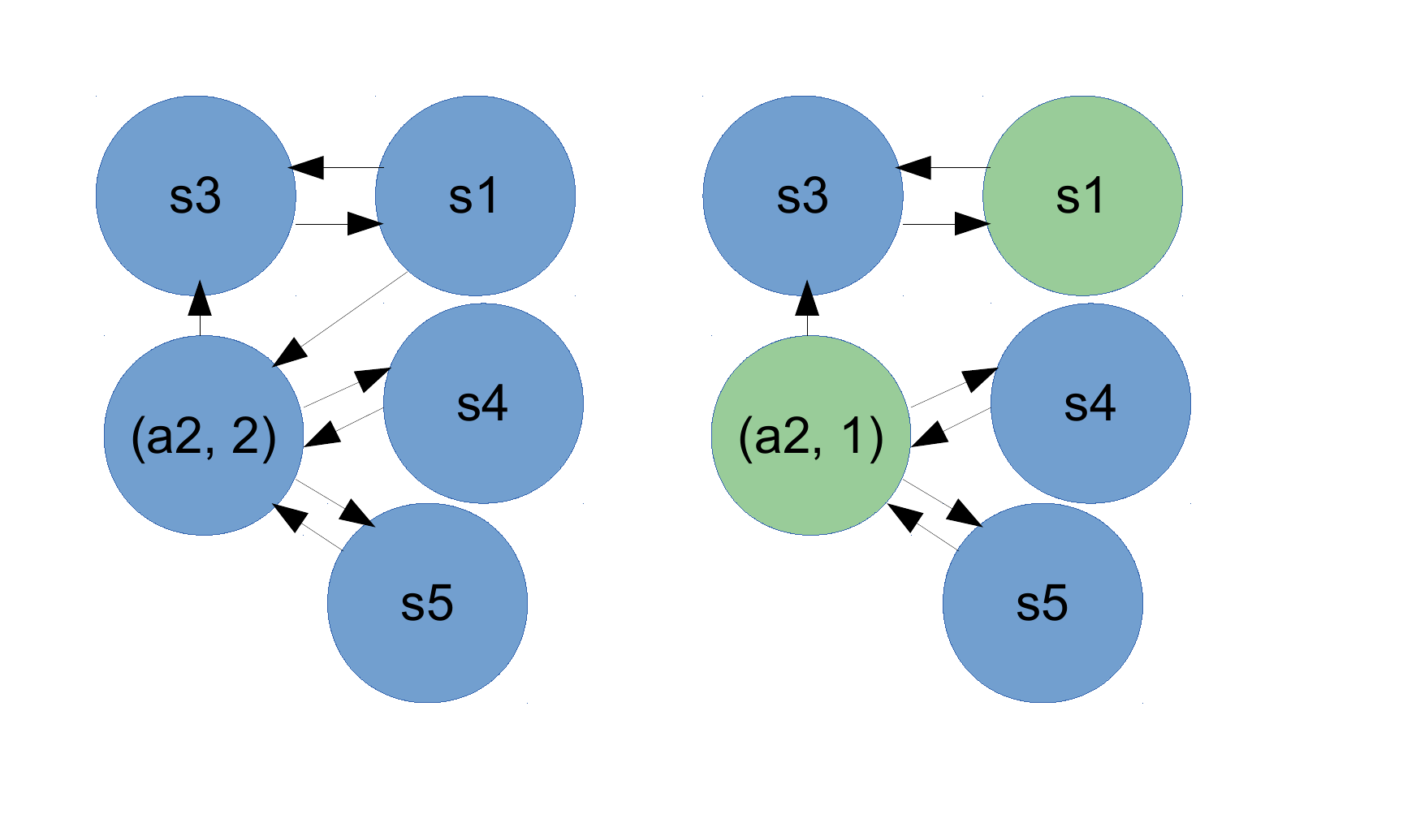}
\end{center} \begin{proof}  \normalfont
     Let $S$ be $\{s_1, (a_2, 2), s_3, s_4, s_5\}$,
     and let $R$ be defined
     only for any combination that
     matches the attack arrows in the drawings above,
     satisfying:
     ${R(\{s_1\}, (a_2, 2)) = 1}$; \\
     ${R(\{s_1, (a_2, 1)\}, s_3) \geq
         \pi(2, s_3)}$;
     ${R(\{s_1\}, s_3) < \pi(2, s_3)}$; \\
     ${R(\{(a_2, 2)\}, s_3) < \pi(2, s_3)}$;
     ${R(\{(a_2, 1)\}, s_4) < \pi(2, s_4)}$; \\
     ${R(\{(a_2, 1)\}, s_5) < \pi(2, s_5)}$;
     ${R(\{(a_2, 2)\}, s_4) \geq \pi(2, s_4)}$; \\
     ${R(\{(a_2, 2)\}, s_5) \geq \pi(2, s_5)}$;
     ${R(\{s_3\}, s_1) \geq 2}$; \\
     ${R(\{s_4\}, (a_2, 1)) \geq 2}$;
     and ${R(\{s_5\}, (a_2, 1)) \geq 2}$,
         among others. \\
         \indent Now, let $S_1$ be $\{s_1\}$,
     and let $S_2$ be $\{(a_2, 2)\}$.
     Then $S_1 \unlhd S_1 \cup S_2$, for
     $S_1$ and $S_1 \cup S_2$ are both
     neither c-admissible nor one-directionally
     attacked.
     The axiom (fewer attackers) is also satisfied.
     However, it is not the case that
     $S_2 \unlhd S_1 \cup S_2$,
     for it does not satisfy (fewer attackers).
 \end{proof}
Moreover, $\unlhd$ does not satisfy
what we may at first expect to hold good, i.e.
the following continuation property.
\begin{definition}[Continuation property
    of $\unlhd$]
    \normalfont
    Let $S_1 \subseteq S$ be such that
    $\alpha(S_1)$ is defined,
    and let $\Maxi(S_1)$ be the set of
    all $S_x \subseteq S$ such that
    $S_1 \unlhd S_x$ and such that
    if $S_x \subset S_y \subseteq S$,
    then not $S_1 \unlhd S_y$.
    We say that
    $\unlhd$ is weakly continuous for $S_1$
    iff there exists some $S_z \in \Maxi(S_1)$
    such that, for any $S_w \subseteq S_z$,
    if
    coalition is permitted between $S_1$ and $S_w \backslash
    S_1$,
    then $S_1 \unlhd S_w$.  We say that
    $\unlhd$ is continuous for $S_1$
    iff it is weakly continuous for $S_1$
    for any $S_z \in \Maxi(S_1)$.
\end{definition}
\begin{theorem}[Profitability discontinuation theorem]
    There exist $S_1, S_2, S_x \subseteq S$
    such that: (1) $\alpha$ is defined for $S_1, S_2$ and $S_x$; (2) $S_x \in \Maxi(S_1)$;
    and (3) $S_2 \subseteq S_x$,
    but such that $S_1 \unlhd S_2$ does not hold
    good. Moreover,
    $\Maxi(S_1) = \Pref(S_1)$ would not change
    this result.
\end{theorem}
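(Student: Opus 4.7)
The plan is to prove this by direct construction of a small counterexample that exhibits the required discontinuity. The natural scenario is to take $S_1$ as a lone c-admissible singleton, grow it by adding an argument that introduces a new external attacker (so the intermediate set becomes one-directionally attacked), and then close it off by adding a defender that neutralises that attacker (so the full set becomes c-admissible and maximal). This mirrors the construction strategies already used for the (larger set) and (better state) cases in the earlier proof.

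Concretely, I would take $S = \lkakko s_1, s_2, s_3, s_4 \rkakko$ with $\pi(2, s_2), \pi(2, s_3) \geq 2$, and let $R$ be defined on exactly two pairs: $R(\lkakko s_3 \rkakko, s_2) = 1$ (a partial attack that does not defeat $s_2$) and $R(\lkakko s_4 \rkakko, s_3) \geq \pi(2, s_3)$ (a full defeat of $s_3$). Setting $S_1 = \lkakko s_1 \rkakko$, $S_2 = \lkakko s_1, s_2 \rkakko$, $S_x = \lkakko s_1, s_2, s_4 \rkakko$, the checks are then routine: $\alpha$ is defined on each (no internal defeats), $S_1$ is vacuously c-admissible (empty $\Attacker$), $S_2$ is one-directionally attacked by $\lkakko s_3 \rkakko$ since neither $s_1$ nor $s_2$ attacks $s_3$, and $S_x$ is c-admissible because its only external attacker $s_3$ is c-defeated by $s_4 \in \alpha(S_x)$.

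From here the three numbered conclusions follow quickly. For (3), $S_2 \subseteq S_x$ holds by construction. For (2), the only proper superset of $S_x$ within $S$ is $S$ itself, in which $s_4$ defeats $s_3$ by Lemma \ref{mono}, so $\alpha$ is undefined there and coalition cannot even be permitted; hence $S_x$ is maximal and $S_x \in \Maxi(S_1)$. For the main claim that $S_1 \unlhd S_2$ fails, I run through the three disjuncts in the definition of $\preceq$: $S_2$ is not c-admissible (its attacker $s_3$ is not c-defeated by $\alpha(S_2) = S_2$); $S_1$ is not one-directionally attacked (it has no attackers at all); and it is not the case that neither set is c-admissible or one-directionally attacked, since $S_1$ is c-admissible. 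So $S_1 \not\preceq S_2$, hence $S_1 \not\unlhd S_2$.

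For the ``moreover'' clause, I would note that in this same construction $S_x$ is in fact c-preferred: it is c-admissible, and its only proper superset in $S$ fails conflict-eliminability, so no c-admissible strict extension exists. Thus $S_x \in \Pref(S_1)$, and replacing $\Maxi(S_1)$ by $\Pref(S_1)$ leaves the same witness in play. The main obstacle I anticipate is bookkeeping around the axioms on $R$: in particular, one must verify that an $R$ defined on exactly these two pairs is consistent with [Quasi-closure by subset relation], [Closure by set union] and the monotonicity axioms, which is fine here because each defined attack has a singleton source and no two defined attacks share a target, so none of the closure or monotonicity clauses forces $R$ to be defined on further pairs.
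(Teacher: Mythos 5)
Your proof is correct and follows essentially the same strategy as the paper's: an explicit counterexample in which the intermediate coalition $S_2$ is one-directionally attacked (hence $S_1 \not\preceq S_2$, killing (better state)) while the maximal superset $S_x$ is c-preferred, so that $\Maxi(S_1)=\Pref(S_1)=\{S_x\}$ also disposes of the ``moreover'' clause; your four-argument framework is in fact a little leaner than the paper's five-argument one. The only nit is your closing remark that no monotonicity clause forces $R$ to be defined on further pairs --- [Attack monotonicity 1] and [Attack monotonicity 3] do force definitions on higher-capacity variants of $s_2$, $s_3$, $s_4$, but since those variants lie outside the coherent set $S$ this is harmless, exactly as in the paper's own ``among others implicit by the conditions of $R$''.
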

\begin{proof} \normalfont
    It suffices to consider the case
    where $\Maxi(S_1) = \Pref(S_1)$. \\
\includegraphics[scale=.32]{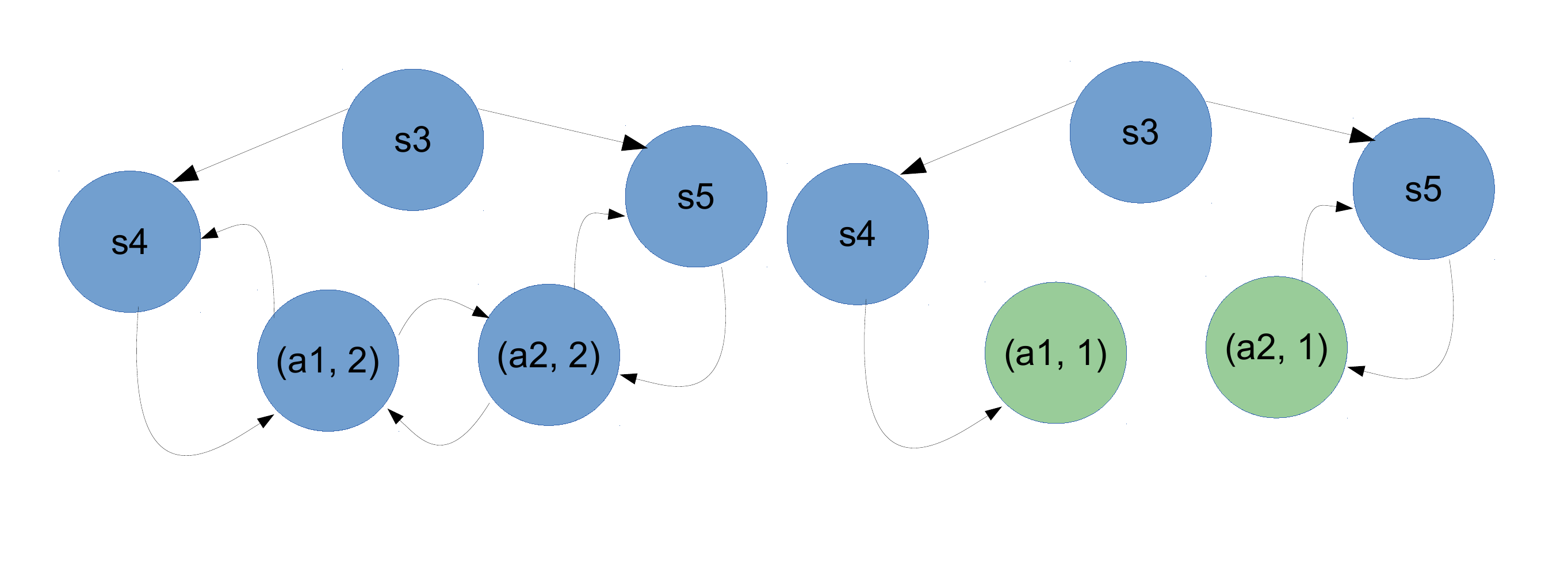} \\
    Let $S$ be $\{(a_1, 2), (a_2, 2), s_3, s_4, s_5\}$,
    and let $R$ be such that it is defined
    only for any combination that matches
    the attack arrows in the drawings above,
    and:
     ${R(\{(a_1, 2)\}, s_4) \geq \pi(2, s_4)}$;
    ${R(\{(a_2, 2)\}, s_5) \geq \pi(2, s_5)}$;
    ${R(\{(a_1, 2)\}, (a_2, 2)) = 1}$; \\
    ${R(\{(a_2, 2)\}, (a_1, 2)) = 1}$;
    ${R(\{s_3\}, s_4) \geq \pi(2, s_4)}$; \\
    ${R(\{(a_2, 1), s_3\}, s_5) \geq \pi(2, s_5)}$;
    ${R(\{s_4\}, (a_1, 1)) \geq 2}$;\\  and
 ${R(\{s_5\}, (a_2, 1)) \geq 2}$,
            among others. Clearly $\alpha$ is
            defined for $S_1 = \{(a_1, 2)\}$,
            $S_a = \{(a_2, 2)\}$
            $S_2 = S_1 \cup S_a$,
            and $S_x = \{(a_1, 2), (a_2, 2), s_3\}$,
            among others.
            Further, $S_x$ is c-preferrable.
           However, it is not the case that
           $S_1 \unlhd S_2$. In fact,
           it is also not the case that $S_a \unlhd S_2$,
     since $S_2$ is one-directionally attacked
     (see the right drawing for the attacks
     in $\view(S, \{(a_1, 2), (a_2, 2)\})$),
     whereas $S_1$ and $S_a$ are neither c-admissible
     nor one-directionally attacked
     (see the left drawing).
\end{proof}
Coalition profitability continuation property
holds good in certain special cases, however.
 \begin{theorem}[Profitability
     continuation theorem]
    Let $S_x \subseteq S$ be
    a c-preferred set.
    Then $\unlhd$ is weakly continuous for
    any $S_1 \subset S_x$
    iff any disjoint pair $S_y, S_z$ of subsets
    of $S_x$ satisfy $S_y \unlhd S_y \cup S_z$.
 \end{theorem}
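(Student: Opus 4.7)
The plan is to anchor both implications on Theorem~\ref{best_coalition}: since $S_x$ is c-preferred and contains each $S_1 \subsetneq S_x$, we have $S_x \in \Pref(S_1) \subseteq \Maxi(S_1)$, which singles $S_x$ out as a canonical maximal profitable extension of any such $S_1$. I also use repeatedly that coalition is permitted between any two disjoint subsets of $S_x$: by Lemma~\ref{mono}, a defeat inside a subset of $S_x$ would lift to a defeat inside $S_x$ itself, contradicting the conflict-eliminability inherent in the c-preferrability of $S_x$; hence $\alpha$ is defined on every subset of $S_x$.

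For $(\Leftarrow)$, I fix any $S_1 \subsetneq S_x$ and propose $S_x$ itself as the witness required by weak continuity. Since $S_x \in \Maxi(S_1)$, it remains to verify that $S_1 \unlhd S_w$ for every $S_w$ with $S_1 \subseteq S_w \subseteq S_x$ (the only case in which the conclusion is meaningful under the (larger set) axiom). Setting $S_y = S_1$ and $S_z = S_w \setminus S_1$, the pair $(S_y, S_z)$ consists of disjoint subsets of $S_x$, so the right-hand-side hypothesis delivers $S_1 \unlhd S_1 \cup (S_w \setminus S_1) = S_w$.

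For $(\Rightarrow)$, I fix disjoint $S_y, S_z \subseteq S_x$. The boundary cases are immediate: $S_z = \emptyset$ reduces to $S_y \unlhd S_y$ by reflexivity of $\unlhd$ (the (larger set) and (fewer attackers) axioms are trivial, and $S_y \preceq S_y$ holds because every conflict-eliminable set falls into one of the three $\preceq$-clauses); while $S_y \cup S_z = S_x$ gives $S_y \unlhd S_x$ directly from Theorem~\ref{best_coalition}. In the general intermediate case, $S_y \subsetneq S_y \cup S_z \subsetneq S_x$, so I apply the weak-continuity hypothesis at $S_y$ to extract some witness $S^* \in \Maxi(S_y)$ and then try to transport its continuation property to the coalition $S_y \cup S_z$.

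The transport step is the main obstacle. Weak continuity is existential, so $S^*$ need not equal $S_x$; when $\Maxi(S_y)$ contains more than one maximal element, nothing in the raw hypothesis forces the witness to lie above $S_y \cup S_z$. My proposed fix exploits the full quantification (weak continuity at \emph{every} $S_1 \subsetneq S_x$) by iterating: enumerate $S_z$ as $s_1,\ldots,s_k$ and, at each intermediate stage $T_i = S_y \cup \{s_1,\ldots,s_i\}$, use weak continuity at $T_i$ to obtain a local witness above $T_i$, stitching the one-step extensions into $S_y \unlhd S_y \cup S_z$. The (fewer attackers) axiom is monotone inside the c-admissible $S_x$ (any new c-defeat only improves the count), so the sensitive point is the $\preceq$-axiom in the sub-case where some $T_i$ is c-admissible but $T_{i+1}$ is not; here the c-admissibility of the ambient $S_x$ is what keeps the chain alive by allowing a further extension of $T_{i+1}$ inside $S_x$ that restores a $\preceq$-clause.
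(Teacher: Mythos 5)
Your $(\Leftarrow)$ half is correct and is in substance the paper's ``If'' case, done more cleanly: the paper argues by contradiction on a triple of disjoint subsets, whereas you exhibit $S_x$ itself as the witness in $\Maxi(S_1)$ (legitimate via Theorem~\ref{best_coalition}) and read off $S_1 \unlhd S_w$ by applying the hypothesis to the disjoint pair $S_1$ and $S_w\setminus S_1$. That part stands, including your charitable restriction to $S_1 \subseteq S_w$.

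The $(\Rightarrow)$ half has a genuine gap, and your proposed repair does not close it. First, the obstacle you identify at $S_y$ recurs verbatim at every stage of your iteration: weak continuity at $T_i$ only yields \emph{some} witness $S^*_i \in \Maxi(T_i)$, and nothing forces $s_{i+1} \in S^*_i$, so you cannot even extract the one-step relation $T_i \unlhd T_{i+1}$. Second, even granting all the one-step profits, the target is the single relation $S_y \unlhd S_y \cup S_z$, and $\unlhd$ is not shown to be transitive: the (fewer attackers) axiom in $T_i \unlhd T_{i+1}$ counts elements of $\Attacker(T_i)$, and these counts for successive $i$ do not compose into the required comparison of two subsets of $\Attacker(S_y)$. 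Third, your claim that (fewer attackers) is ``monotone inside the c-admissible $S_x$'' is false: enlarging a coalition can introduce internal partial conflicts that weaken its intrinsic arguments and destroy c-defeats --- this is exactly the mechanism in the proof of Theorem~\ref{asymmetry}, where $\{(a_2,2)\}$ c-defeats $s_4$ and $s_5$ while $\{s_1,(a_2,2)\}$, whose relevant intrinsic argument is $(a_2,1)$, c-defeats neither. For what it is worth, the paper disposes of this direction with the single line ``by definition of weak continuation property,'' which implicitly takes the witness in $\Maxi(S_y)$ to be one containing $S_y\cup S_z$ (e.g.\ $S_x$); that is precisely the step you correctly flag as unjustified under the literal existential definition, but your stitching argument does not supply the missing justification either.
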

 \begin{proof}    \normalfont
     \begin{description}
         \item[\textbf{If:}]
             Suppose, by way of showing
             contradiction, there are three
             disjoint subsets of $S_x$:
             $S_a, S_b$ and $S_c$, such that
             $S_a \unlhd {S_a \cup S_b} {\not\unlhd}
             {(S_a \cup S_b) \cup S_c}$.
              By assumption
              we have $(S_a \cup S_b) \unlhd
              (S_a \cup S_b) \cup S_c$, contradiction.
         \item[\textbf{Only if:}] By definition of
             weak continuation property of $\unlhd$.
     \end{description}
 \end{proof}
  \subsection{Coalition formability semantics}
  We use the profitability relation
  to express our coalition formability semantics.
  We set forth three rational utility postulates:
\begin{enumerate}[label=\Roman*]
       \item Coalition is good when it is profitable at least
           to one party.
       \item Coalition is good when
           it is profitable to both parties.
       \item Coalition is good when
           maximal potential future profits
           are expected from it.
   \end{enumerate}
Of these, the first two can be understood immediately
with the profitability relation. Say there
are two sets $S_1$ and $S_2$ between which
coalition is permitted. For I,
at least either $S_1 \unlhd S_1 \cup S_2$
or $S_2 \unlhd S_1 \cup S_2$ must hold good.
In comparison, both of them must hold good
for II.
Our interpretation of the last postulate is as follows.
Suppose a party, some conflict-eliminable set $S_1 \subseteq S$ in our context,
considers coalition formation with another conflict-eliminable
set $S_2$.
We know that $S_2$ is some subset of $S_1 \subseteq S \backslash S_1$. Before $S_1$ forms a coalition
with $S_2$, we have $\Maxi(S_1)$ as the set of
maximal coalitions possible for $S_1$. Once the coalition is formed,
we have $\Maxi(S_1 \cup S_2)$ as the set of
maximal coalitions possible for the coalition. Here clearly
$\Maxi(S_1 \cup S_2) \subseteq \Maxi(S_1)$. What this
means is that a particular choice of $S_2$ blocks
any possibilities in $\Maxi(S_1) \backslash \Maxi(S_1
\cup S_2)$:  they become unrealisable from $S_1 \cup S_2$.
Hence $S_1$ has an incentive not to form
a coalition with a $S_2$ if all the members
of $\Maxi(S_1 \cup S_2)$ are strictly and comparatively less
profitable than some member of $\Maxi(S_1)$.
We reflect
this intuition.
\begin{definition}[Maximal profitability relation]
    \normalfont       {\ }\\
%    Let $\leq_{f}: 2^{\mathcal{S}} \times 2^{\mathcal{S}}$
%    be such that for any $S_1, S_2 \subseteq S$
%    if $S_1 \leq_f S_2$, then $\alpha$ is defined
%    both for $S_1$ and $S_2$ $\andC$
%    $S_2$ is at least as good by (fewer attackers)
%    as $S_1$. Then we define $\unlhd_{\text{m}}: %2^{\mathcal{S}} \times
%    2^{\mathcal{S}}$ to be such that
%    if $S_1 \unlhd_{\text{m}} S_2$, then
%    both of the following conditions satisfy:
%    \begin{enumerate}
%        \item $S_1 \unlhd S_2$.
%    \item  Some $S_x \in \Maxi(S_2)$ is such that
%         $|S_y| \leq |S_x|$ $\orC$ $S_y \preceq S_x$
%         $\orC$ $S_y \leq_f S_x$ for
%         any $S_y \in \Maxi(S_1)$. Here,
%         $\orC$ has the semantics of classical logic
%         disjunction.
%\end{enumerate}
%
%
Let $\leq_{l}, \leq_{b}, \leq_{f}:
    2^{\mathcal{S}} \times 2^{\mathcal{S}}$
    be such that they satisfy all the following:
    \begin{enumerate}
        \item $S_1 \leq_l S_2$ iff
            $|S_1| \leq |S_2|$.
        \item $S_1 \leq_b S_2$ iff
            $S_2$ is at least as good by (better state)
            as $S_1$.
        \item $S_1 \leq_f S_2$ iff
            $S_2$ is at least as good by (fewer attackers)
            as $S_1$.
    \end{enumerate}
    We write $S_1 <_{\beta} S_2$ for each
    $\beta \in \{l,b,f\}$ just when
    $S_1 \leq_{\beta} S_2$
    and
    not $S_2 \leq_{\beta} S_1$.
Then we define $\unlhd_{\text{m}}: 2^{\mathcal{S}} \times
    2^{\mathcal{S}}$ to be such that
    if $S_1 \unlhd_{\text{m}} S_2$, then
    both of the following conditions satisfy:
    \begin{enumerate}
        \item $S_1 \unlhd S_2$.
    \item  Some $S_x \in \Maxi(S_2)$ is such that,
        for all $S_y \in \Maxi(S_1)$,
        if $S_x <_{\beta} S_y$ for
        some
        $\beta \in \{l, b, f\}$, then
        there exists
        $\gamma \in (\{l,b,f\} \backslash \beta)$
        such that
        $S_y <_{\gamma} S_x$.
\end{enumerate}
\end{definition}
Intuitively,
if $S_1 \unlhd_{\text{m}} S_2$,
then at least one set in $\Maxi(S_1)$ maximal
in the three criteria: the set size, the state quality
and the number of external attackers, is reachable from $S_2$.
The maximality here is judged by the principle
that if $S_2, S_3 \in \Maxi(S_1)$ are equal
by two criteria, then $S_3$ is better if
it is better than $S_2$ by the remaining criterion. \\
\indent We define four formability semantics: \textsf{W} which respects
   I, \textsf{M} which respects II (and implicitly
   also I), \textsf{WS} which respects
   I and III,
   and \textsf{S} which respects II and III.
\begin{definition}[Coalition formability semantics]
    \normalfont
     \begin{align*}
       \textsf{W}(S_1) &=  \{S_2 \subseteq S
               \ | \ S_1 \unlhd S_1 \cup S_2
               \ \orC\ S_2 \unlhd S_1 \cup S_2\}.\\
        \textsf{M}(S_1) &=  \{S_2 \subseteq S
               \ | \ S_1 \unlhd S_1 \cup S_2
               \ \andC\ S_2 \unlhd S_1 \cup S_2\}.\\
           \textsf{WS}(S_1) &=
           \{S_2 \subseteq S \ | \
               S_1 \unlhd_{\text{m}} S_1 \cup S_2
               \ \orC \ S_2 \unlhd_{\text{m}} S_1 \cup
               S_2
           \}.\\
           \textsf{S}(S_1) &=
           \{S_2 \subseteq S
               \ | \ S_1 \unlhd_{\text{m}} S_1  \cup S_2
               \ \andC \ S_2 \unlhd_{\text{m}} S_1
               \cup S_2\}.
   \end{align*}
\end{definition}
Here,
         $\orC$ has the semantics of classical logic
         disjunction.
Intuitively,  $\rho(S_1)$ for
$\rho \in \{\textsf{W}, \textsf{M}, \textsf{WS},
    \textsf{S}\}$ means that
$S_1$ is comfortable with forming a coalition with
$S_2 \in \rho(S_1)$ under the given criteria.
\begin{theorem}
     \label{relation}
    Let $S_1 \subseteq S$ be such that
    $\alpha(S_1)$ is defined.
    The following all hold good.
    (1)
         $\textsf{M}(S_1) \subseteq \textsf{W}(S_1)$.
         (2)
        $\textsf{WS}(S_1) \subseteq \textsf{W}(S_1)$.
        (3) $\textsf{S}(S_1) \subseteq \textsf{M}(S_1)$.
        (4) $\textsf{S}(S_1) \subseteq \textsf{WS}(S_1)$.
        Meanwhile, neither
        $\textsf{WS}(S_1) \subseteq \textsf{M}(S_1)$
nor  $\textsf{M}(S_1) \subseteq \textsf{WS}(S_1)$
is necessary.
 \end{theorem}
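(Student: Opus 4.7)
The plan is to handle the four inclusions directly from the definitions, and then build two small argumentation frameworks as counterexamples for the non-inclusions. The key observation that makes the inclusions essentially immediate is that $S_1 \unlhd_{\text{m}} S_2$ entails $S_1 \unlhd S_2$ by clause 1 of the definition of $\unlhd_{\text{m}}$; combining this with the boolean structure of the four semantics (\textsf{W} and \textsf{WS} being disjunctions, \textsf{M} and \textsf{S} being conjunctions) gives the diagram.

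Concretely, I first unfold the definitions. For (1), assume $S_2 \in \textsf{M}(S_1)$, so $S_1 \unlhd S_1 \cup S_2$ $\andC$ $S_2 \unlhd S_1 \cup S_2$; in particular $S_1 \unlhd S_1 \cup S_2$, so $S_2 \in \textsf{W}(S_1)$. For (2), assume $S_2 \in \textsf{WS}(S_1)$, so $S_1 \unlhd_{\text{m}} S_1 \cup S_2$ or $S_2 \unlhd_{\text{m}} S_1 \cup S_2$; in either case the corresponding $\unlhd$ also holds, giving membership in $\textsf{W}(S_1)$. For (3), assume $S_2 \in \textsf{S}(S_1)$; both $S_1 \unlhd_{\text{m}} S_1 \cup S_2$ and $S_2 \unlhd_{\text{m}} S_1 \cup S_2$ downgrade to the corresponding $\unlhd$ statements, i.e.\ membership in $\textsf{M}(S_1)$. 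For (4), conjunction of $\unlhd_{\text{m}}$ assertions trivially implies their disjunction, so $\textsf{S}(S_1)\subseteq \textsf{WS}(S_1)$.

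For the non-inclusions, I produce counterexamples by reusing or adapting the kind of framework used in Theorem \ref{asymmetry}. For $\textsf{WS}(S_1) \not\subseteq \textsf{M}(S_1)$, the idea is to exploit the asymmetry of $\unlhd$: pick $S_1, S_2$ disjoint with $S_1 \unlhd_{\text{m}} S_1 \cup S_2$ (hence $S_2 \in \textsf{WS}(S_1)$) but with $S_2 \not\unlhd S_1 \cup S_2$ (hence $S_2 \notin \textsf{M}(S_1)$). Theorem \ref{asymmetry} already exhibits a framework of that shape; I will verify that, in the same $(S,R)$, $S_1 \unlhd_{\text{m}} S_1 \cup S_2$ is indeed realised by taking $\Maxi$ to contain a witness that is maximal in the three criteria --- e.g.\ by making $S_1 \cup S_2$ itself reach a c-preferred set maximising $\leq_l$, $\leq_b$ and $\leq_f$ compared with the members of $\Maxi(S_1)$.

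For $\textsf{M}(S_1) \not\subseteq \textsf{WS}(S_1)$, I construct a framework where $S_1 \unlhd S_1 \cup S_2$ and $S_2 \unlhd S_1 \cup S_2$ both hold, so that $S_2 \in \textsf{M}(S_1)$, yet neither $S_1 \unlhd_{\text{m}} S_1 \cup S_2$ nor $S_2 \unlhd_{\text{m}} S_1 \cup S_2$: that is, every $S_x \in \Maxi(S_1 \cup S_2)$ is beaten in at least two of $\{\leq_l,\leq_b,\leq_f\}$ by some member of $\Maxi(S_1)$ (and symmetrically for $S_2$). The cleanest way is to let the coalition $S_1 \cup S_2$ attract an external attacker that locks all its maximal continuations into being relatively small and not c-admissible, while $S_1$ alone and $S_2$ alone each belong to strictly larger and c-admissible continuations. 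I expect this to be the main obstacle: one has to juggle (larger set), (better state), (fewer attackers) simultaneously so that (i) $\unlhd$ still holds in both directions into $S_1\cup S_2$, but (ii) the $\Maxi$-comparison in Definition of $\unlhd_{\text{m}}$ fails on at least two criteria for every candidate $S_x$. Once a suitable framework is drawn (essentially a variation on the figures in Theorem \ref{asymmetry} with one extra unattacked argument disjoint from $S_1\cup S_2$ that enlarges $\Maxi(S_1)$ and $\Maxi(S_2)$), verifying conditions is mechanical through the eight axioms on $R$ and Definition of $\unlhd_{\text{m}}$.
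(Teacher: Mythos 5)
Your treatment of the four inclusions is correct and is exactly the intended argument: clause 1 of the definition of $\unlhd_{\text{m}}$ gives $\unlhd_{\text{m}}\ \subseteq\ \unlhd$, and the rest is the propositional structure (conjunction implies disjunction) of the four semantics. The paper does not even spell this out, so no issue there.

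The genuine gap is in the second half. The claim that neither $\textsf{WS}(S_1) \subseteq \textsf{M}(S_1)$ nor $\textsf{M}(S_1) \subseteq \textsf{WS}(S_1)$ is necessary requires exhibiting concrete frameworks, and you have only sketched strategies. For $\textsf{WS} \not\subseteq \textsf{M}$ you point at the framework of the asymmetry theorem, but the load-bearing step --- showing $S_1 \unlhd_{\text{m}} S_1 \cup S_2$ there, which requires computing $\Maxi(S_1)$ and $\Maxi(S_1\cup S_2)$ and running the three-criteria comparison of the definition of $\unlhd_{\text{m}}$ --- is left as an unverified expectation. For $\textsf{M} \not\subseteq \textsf{WS}$ you explicitly flag the construction as ``the main obstacle'' and do not produce it; a proof cannot defer its hardest step to a framework that is merely asserted to be drawable, since the difficulty is precisely in making (larger set), (better state) and (fewer attackers) hold into $S_1\cup S_2$ while every maximal continuation of $S_1 \cup S_2$ loses the $\Maxi$-comparison. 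The paper discharges both non-inclusions at once with a single explicit seven-argument framework placed right after the theorem: there it computes $\textsf{M}(\{(a_2,2)\})$ and $\textsf{WS}(\{(a_2,2)\})$ and observes that $\{s_1, s_6\}$ and $\{s_6\}$ lie in $\textsf{M}$ but not in $\textsf{WS}$ (they block strictly better coalitions such as $\{(a_2,2),(a_3,2),s_7\}$ under (fewer attackers)), while $\{s_1\}$ lies in $\textsf{WS}$ but not in $\textsf{M}$. To complete your proof you would need to carry out an analogous explicit construction and verification; your high-level plan is plausible but, as it stands, the counterexample half of the theorem is not proved.
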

We provide one example
to illustrate the semantic
differences.
  Let $S = \{s_1, (a_2, 2), (a_3, 2), s_4, s_5, s_6,
  s_7\}$,
 and let $R$ be such that it is defined
 only for any combination that matches
 the attack arrows in the above drawings,
 and such that:
 \begin{adjustwidth}{1cm}{}
 ${R(\{(a_2, 2)\}, (a_3, 2)) = 1}$;
 ${R(\{(a_2, 2)\}, s_4) \geq \pi(2, s_4)}$;\\
 ${R(\{(a_2, 2)\}, s_5) \geq \pi(2, s_5)}$;
 ${R(\{(a_2, 1), s_1\}, (a_3, 2)) \geq 2}$;\\
${R(\{(a_2, 1)\}, s_4) < \pi(2, s_4)}$;
 ${R(\{(a_2, 1)\}, s_5) < \pi(2, s_5)}$;\\
 ${R(\{s_1\}, (a_3, 2)) = 1}$;
 ${R(\{(a_3, 1)\}, s_1) \geq \pi(2, s_1)}$;\\
${R(\{(a_3, 1)\}, s_6) \geq \pi(2, s_6)}$;
${R(\{s_6\}, (a_3, 1)) = 2}$;\\
${R(\{s_6\}, s_7) \geq \pi(2, s_7)}$;
${R(\{s_7\}, s_6) \geq \pi(2, s_6)}$;\\
${R(\{s_7\}, s_5) \geq \pi(2, s_5)}$;
${R(\{s_7\}, s_4) \geq \pi(2, s_4)}$;\\
${R(\{s_4\}, s_7) < \pi(2, s_7)}$;
${R(\{s_4\}, (a_2, 1)) = 2}$;\\
${R(\{s_5\}, (a_2, 1)) = 2}$;
${R(\{s_5\}, s_7) < \pi(2, s_7)}$;\\
${R(\{s_4, s_5\}, s_7) \geq \pi(2, s_7)}$, among others
that are implicit by the conditions of $R$.
\end{adjustwidth}
\begin{center}
\includegraphics[scale=0.25]{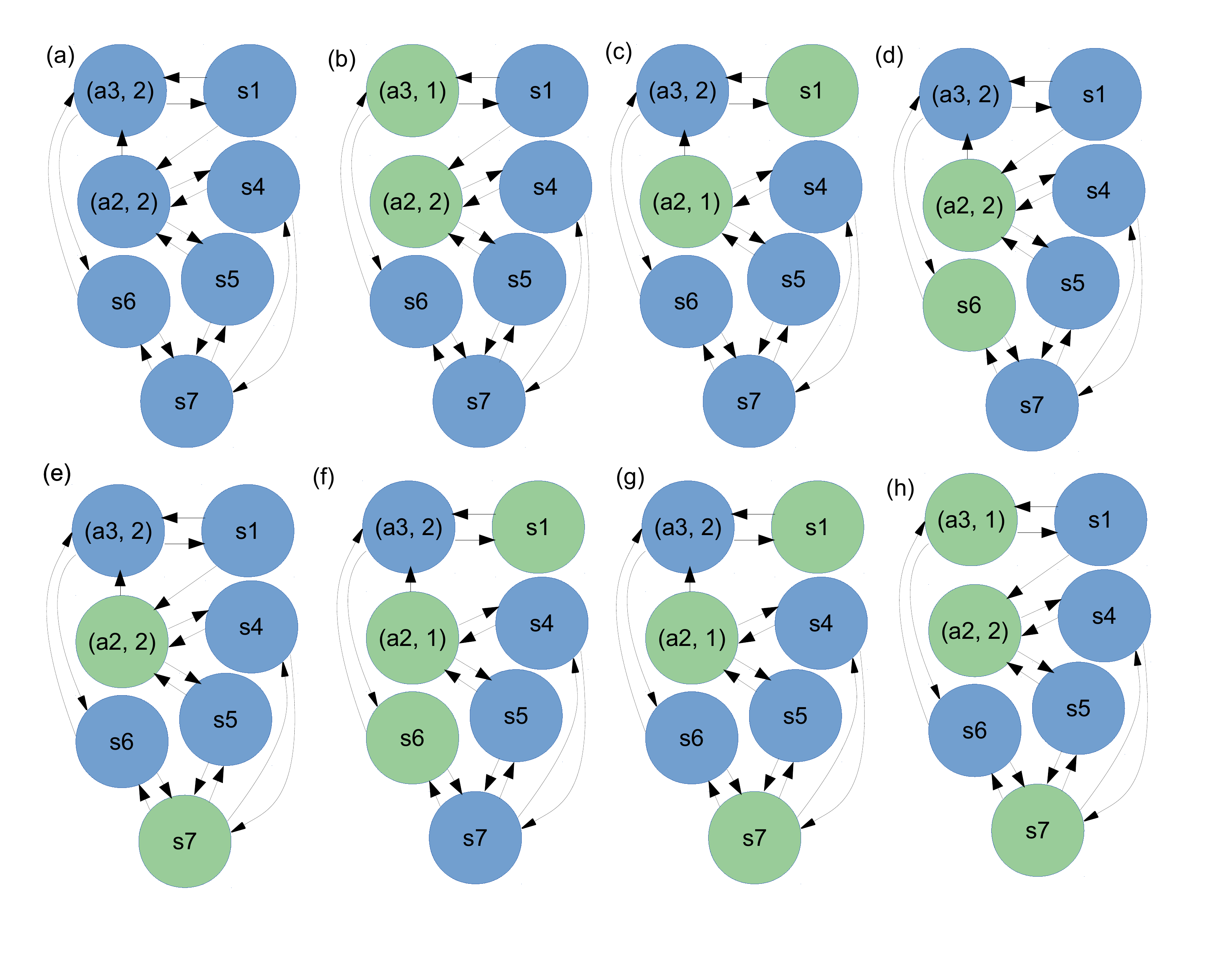}
\end{center}
We have:
{\small
 \begin{align*}
         \textsf{W}(\{(a_2, 2)\}) &=  \{\{(a_3, 2)\},
             \{s_1\}, \{s_6\},\{s_7\}, \{s_1, s_6\},
             \{s_1, s_7\},\\ &{\ }\quad\{(a_3, 2), s_7\}
         \}.\\
     \textsf{M}(\{(a_2, 2)\}) &= \{\{(a_3, 2)\},
         \{s_7\}, \{s_1, s_6\}, \{s_1, s_7\}, \{(a_3, 2), s_7\}\}.\\
     \textsf{WS}(\{(a_2, 2)\}) &= \{\{(a_3, 2)\}, \{s_1\}, \{s_7\}, \{s_1, s_7\},
         \{(a_3, 2), s_7\}\}.\\
     \textsf{S}(\{(a_2, 2)\}) &=
     \textsf{WS}(\{(a_2, 2)\}).
 \end{align*}
}
The above drawings from (b) to (h) correspond
to ${\view(S,\ \{(a_2, 2)\} \cup S_i)}$ where
$S_i$ is the $i$th element in the set
on the right hand side of the first equation.
The first two equalities can be checked one by one.
For {\small $\textsf{WS}(\{(a_2, 2)\})$},
we note that $\{s_1,  s_6\}$ (see (f)) is excluded because:
(1) for $\{(a_2, 2)\}$,
$\{(a_2, 2), (a_3, 2), s_7\}$ and
$\{s_1, (a_2, 2), s_7\}$ (see (g) and (h)) are both strictly
better in $\unlhd_{\text{m}}$ than
$\{s_1, (a_2, 2), s_6\}$ by (fewer attackers), and
$\{s_1, (a_2, 2), s_6\}$ would block
the better coalitions; and (2)
for $\{s_1, s_6\}$,
$\{s_1, s_6, s_4, s_5\}$ is strictly better
in $\unlhd_{\text{m}}$ than
$\{s_1, (a_2, 2), s_6\}$. Similarly $\{s_6\}$
is excluded.
Finally, in this particular example, both
$\{s_1, (a_2, 2),  s_7\}$ and
$\{(a_2, 2), (a_3, 2), s_7\}$ are c-preferred,
and, moreover, we have:
$\{(a_3, 2)\} \unlhd_{\text{m}}
\{(a_2, 2), (a_3, 2), s_7\}$;
$\{s_1\} \unlhd_{\text{m}} \{s_1, (a_2, 2), s_7\}$;
$\{s_7\} \unlhd_{\text{m}} \{s_1, (a_2, 2) s_7\}$;
and $\{s_7\} \unlhd_{\text{m}} \{(a_2, 2), (a_3, 2), s_7\}$.
By these together with the subsumption of $\textsf{S}(\{(a_2, 2)\})$
in $\textsf{WS}(\{(a_2, 2)\})$ (Theorem \ref{relation}),
the last equality for $\textsf{S}(\{(a_2,2)\})$ follows.
\section{Conclusion}
We proposed abstract-argumentation-theoretic
coalition profitability and formability semantics,
and showed theoretical results.
Our work
has a connection to several important subfields of
abstract argumentation theory such as
postulate-based abstract argumentation,
attack-tolerant abstract argumentation
and dynamic abstract argumentation.
It is our hope that
this study will further aid in
linking the rich knowledge that is
being accumulated in the literature.

%The following observations allow us
%to infer that our work could be useful
%in bringing together results in argumentation frameworks
%scattered in several of its subfields.
\hide{
\section{Argumentation for
    drug selections}
Based on \cite{JSH2014}.
Of Drug: $\Dr$ is Diuretic.  $\Tz$ is Thiazide.  $\Bb$
is beta blocker. $\Dih$ is Dihydropyridine.
$\Md$: Methyldopa. $\Hr$: Hydralazine
$\La$: Labetalol.  $\Nif$: Nifedipine
$\Dig$: Digoxin. $\Car$: Carvedilol.
$\Dob$: Dobutamine.
\\\\
Diseases:
$\tac$ is Tachycardia. $\anp$ is Angina Pectoris.
$\postMi$ is Post-Myocardial Infarction.
$\dm$ is Diabetes Mellitus. $\ms$ is Metabolic Syndrome.
$\os$ is Osteoporosis. $\anp$ is  Aspiration Pneumonia.
$\bc$ is Bradycardia. $\hprk$ is Hyper Kalemia.
$\pr$ is Pregnancy. $\ane$ is Angioedema.
$\ras$ is Renal Artery Stenosis.
$\asth$ is Asthma. $\gt$ is gout.
$\igt$ is Impaired Glucose Tolerance (IGT).
$\old$ is Obstructive Lung Disease.
$\pad$ is Peripheral Arterial Disease (PAD).
$\lve$ is Left ventricular enlargement.
$\hpl$ is Hyperlipidemia.
$\arr$ is Arrhythmia.
$\hpom$ is Hypomagnesemia.
$\hpot$ is Hypotension.
\\\\
Others: $\dia$ is Dialysis. \\\\
Drugs selectable. Starred options are
more preferable than the others.
\begin{multicols}{2}
\begin{enumerate}[label={\protect\perhapsstar\arabic*}]
     \item $\Ca \leftarrow \hpt$.
     \item $\ARB \leftarrow \hpt$.
     \item $\ACE \leftarrow \hpt$.
     \item $\Dr \leftarrow \hpt$.
     \item $\Bb \leftarrow \hpt$.
         \setcounter{enumi}{0}
\staritem $\Ca \leftarrow \hpt, \lve$.
    \staritem  $\ARB \leftarrow \hpt, \lve$.
    \staritem $\ACE \leftarrow \hpt, \lve$.
    \staritem $\ARB \leftarrow \hpt,\hf$.
    \staritem $\ACE \leftarrow \hpt,\hf$.
    \staritem $\Dr \gtrdot \Tz \leftarrow \hpt,\hf$.
    \staritem $\Bb \leftarrow \hpt,\hf$.
    \staritem $\Ca \gtrdot \neg \Dih \leftarrow \hpt, \tac$.
    \staritem $\Bb \leftarrow \hpt,\tac$.
    \staritem $\Ca \leftarrow \hpt, \anp$.
    \staritem $\Bb \leftarrow \hpt, \anp$.
    \staritem $\ARB \leftarrow \hpt, \postMi$.
    \staritem $\ACE \leftarrow \hpt,\postMi$.
    \staritem $\Bb \leftarrow \hpt,\postMi$.
    \staritem $\cons{\Ca}{\hpt,\ckdplus}$.
    \staritem $\cons{\ARB}{\hpt,\ckdplus}$.
    \staritem $\cons{\ACE}{\hpt,\ckdplus}$.
    \staritem $\cons{\Dr \gtrdot \Tz}{\hpt,\ckdplus}$.
    \staritem \cons{\ARB}{\hpt,\ckdminus}.
    \staritem \cons{\ACE}{\hpt,\ckdminus}.
    \staritem \cons{\ARB}{\hpt,\ms}.
    \staritem \cons{\ACE}{\hpt,\ms}.
    \staritem \cons{\ARB}{\hpt,\dm}.
    \staritem \cons{\ACE}{\hpt,\dm}.
    \staritem \cons{\Dr \gtrdot \Tz}{\hpt, \os}.
    \staritem \cons{\ACE}{\hpt,\asp}.
    \staritem \cons{\ARB}{\hpt,\hpl}
    \staritem \cons{\ACE}{\hpt,\hpl}
    \staritem \cons{\Ca}{\hpt,\hpl}
    \staritem \cons{\Md + \Hr + \La}{\hpt,\pr \gtrdot lt 20 weeks}
    \staritem \cons{\Md + \Hr + \La + (\Ca \gtrdot \Nif)}{\hpt,\pr \gtrdot ge 20 weeks}
 \end{enumerate}
 \end{multicols}
 \noindent Absolute contraindications.
 \begin{multicols}{2}
 \begin{enumerate}
     \item \badcons{\Ca\gtrdot \neg \Dih}{\bc}.
     \item \badcons{\ARB}{\pr}.
     \item \badcons{\ARB}{\hprk}.
     \item \badcons{\ACE}{\pr}.
     \item \badcons{\ACE}{\hprk}.
     \item \badcons{\ACE}{\ane}.
     \item \badcons{\ACE}{\aph \gtrdot ?}.
     \item \badcons{\ACE}{\dia \gtrdot ?}.
     \item \badcons{\Dr \gtrdot \Tz}{\hpok}.
     \item \badcons{\Bb}{\bc \gtrdot \severe}.
 \end{enumerate}
 \end{multicols}
 \noindent Needs care in use.
 \begin{multicols}{2}
     \begin{enumerate}
         \item \alertcons{\Ca \gtrdot \neg \Dih}{\hf}.
         \item \alertcons{\ARB}{\ras}.
         \item \alertcons{\ACE}{\ras}.
         \item \alertcons{\Dr \gtrdot \Tz}{\gt}
         \item \alertcons{\Dr \gtrdot \Tz}{\pr}
         \item \alertcons{\Dr \gtrdot \Tz}{\igt}
         \item \alertcons{\Bb}{\igt}
         \item \alertcons{\Bb}{\old}
         \item \alertcons{\Bb}{\pad}
     \end{enumerate}
 \end{multicols}
 \noindent Drug preferences.
 \begin{multicols}{2}
 \begin{enumerate}
     \item $\Ca > \Bb$.
     \item $\ARB > \Bb$.
     \item $\ACE > \Bb$.
     \item $(\Dr \gtrdot \Tz) > \Bb$.
     \item $\textbf{P}_2 > \textbf{N}_2$.
     \item $\textbf{P}_3 > \textbf{N}_3$.
      \end{enumerate}
  \end{multicols}
  \noindent $\textbf{P}_2$ is any of
  $\Ca + \ARB, \Ca + \ACE, \Bb + \ARB,
      \Bb + \ACE$ or $\Ca + \Bb$.
  $\textbf{N}_2$ is any other
  two drug combinations.
  $\textbf{P}_3$ is any
  of $\Ca + \ARB + \Bb$ and $\Ca + \ACE + \Bb$.
  $\textbf{N}_3$ is any
  other three drug combinations.
  \section{Drugs for chronic heart failure}
  \begin{multicols}{2}
      \begin{enumerate}
          \item \cons{\ACE}{\hf}
          \item \cons{\ARB}{\hf}
          \item \cons{\Dih}{\hf}
      \end{enumerate}
  \end{multicols}
  \noindent Contraindications?
  \begin{multicols}{2}
      \begin{enumerate}
          \item  \badcons{(\Bb \gtrdot \Car) +
                  \Dob}{\cdot}
          \item \badcons{\ARB + \ACE + (\Dih \gtrdot
                  \CaSpare)}{\cdot}
      \end{enumerate}
  \end{multicols}
  \noindent Needs care in use.
  \begin{multicols}{2}
  \begin{enumerate}
      \item \alertcons{\Dig}{\hf, \arr}
      \item \alertcons{\Dig}{\hf, \female}
      \item \alertcons{\Dih \gtrdot \Tz}{\hpok}
      \item \alertcons{\Dih \gtrdot \Tz}{\hpom}
      \item \alertcons{\ACE}{\hf, \hpot}
      \item \alertcons{\ACE}{\hf,\hpot}
      \item \alertcons{\ARB}{\hf, \hprk}
      \item \alertcons{\ARB}{\hf,\hprk}
  \end{enumerate}
  \end{multicols}
  Drug preferences.
  \begin{multicols}{2}
      \begin{enumerate}
          \item $(\Dih \gtrdot loop) > (\Dih \gtrdot
              \Tz)$
      \end{enumerate}
  \end{multicols}
}
 \bibliography{references}

\begin{thebibliography}{10}

\bibitem{Amgoud05}
Leila Amgoud, `{An argumentation-based model for reasoning about coalition
  structures}', in {\em {Argumentation in Multi-Agent Systems}}, pp. 217--228,
  (2005).

\bibitem{Amgoud16}
Leila Amgoud and Jonathan Ben-Naim, `{Axiomatic Foundations of Acceptability
  Semantics}', in {\em {KR}}, pp. 2--11, (2016).

\bibitem{Amgoud09}
Leila Amgoud and Philippe Besnard, `{Bridging the Gap between Abstract
  Argumentation Systems and Logic}', in {\em {SUM}}, pp. 12--27, (2009).

\bibitem{Arieli12}
Ofer Arieli, `{Conflict-Tolerant Semantics for Argumentation Frameworks}', in
  {\em {JELIA}}, pp. 28--40, (2012).

\bibitem{Arisaka16b}
Ryuta Arisaka and Ken Satoh, `{Balancing Rationality and Utility in Logic-Based
  Argumentation with Classical Logic Sentences and Belief Contraction}', in
  {\em {PRIMA}}, pp. 168--180, (2016).

\bibitem{Baumann10b}
Ringo Baumann and Gerhard Brewka, `{Expanding Argumentation Frameworks:
  Enforcing and Monotonicity Results}', in {\em {COMMA}}, pp. 75--86, (2010).

\bibitem{Baumann15}
Ringo Baumann and Gerhard Brewka, `{AGM Meets Abstract Argumentation: Expansion
  and Revision for Dung Frameworks}', in {\em {IJCAI}}, pp. 2734--2740, (2015).

\bibitem{Bex03}
Floris Bex, Henry Prakken, Chris Reed, and Douglas Walton, `{Towards a Formal
  Account of Reasoning about Evidence: Argumentation Schemes and
  Generalisations}', {\em {Artificial Intelligence and Law}}, {\bf 11}(2),
  125--165, (2003).

\bibitem{Boella09a}
Guido Boella, Souhila Kaci, and Leendert van~der Torre, `{Dynamics in
  Argumentation with Single Extensions: Abstraction Principles and the Grounded
  Extension}', in {\em {ECSQARU}}, pp. 107--118, (2009).

\bibitem{Boella09b}
Guido Boella, Souhila Kaci, and Leendert van~der Torre, `{Dynamics in
  Argumentation with Single Extensions: Attack Refinement and the Grounded
  Extension}', in {\em {AAMAS}}, pp. 1213--1214, (2009).

\bibitem{Boella08}
Guido Boella, Leendert van~der Torre, and Serena Villata, `{Social Viewpoints
  for Arguing about Coalitions}', in {\em {PRIMA}}, pp. 66--77, (2008).

\bibitem{Bulling08}
Nils Bulling, J{\"u}rgen Dix, and Carlos~I. Ches{\~n}evar, `{Modelling
  Coalitions: ATL + Argumentation}', in {\em {AAMAS}}, pp. 681--688, (2008).

\bibitem{Caminada05}
Martin Caminada and Leila Amgoud, `{An Axiomatic Account of Formal
  Argumentation}', in {\em {AAAI}}, pp. 608--613, (2005).

\bibitem{Caminada07}
Martin Caminada and Leila Amgoud, `{On the evaluation of argumentation
  formalisms}', {\em {Artificial Intelligence}}, {\bf 171}(5-6),  286--310,
  (2007).

\bibitem{Cayrol10}
Claudette Cayrol, Florence~Dupin de~Saint-Cyr, and Marie-Christine
  Lagasquie-Schiex, `{Change in Abstract Argumentation Frameworks: Adding an
  Argument}', {\em {Journal of Artificial Intelligence Research}}, {\bf 38},
  49--84, (2010).

\bibitem{Coste-Marquis14b}
Sylvie Coste-Marquis and S{\'e}bastien Konieczny, `{A Translation-Based
  Approach for Revision of Argumentation Frameworks}', in {\em {JELIA}}, pp.
  397--411, (2014).

\bibitem{Coste-Marquis14}
Sylvie Coste-Marquis, S{\'e}bastien Konieczny, Jean-Guy Mailly, and Pierre
  Marquis, `{On the Revision of Argumentation Systems: Minimal Change of
  arguments Statuses}', in {\em {KR}}, (2014).

\bibitem{Dung95}
Phan~M. Dung, `On the {Acceptability} of {Arguments} and {Its} {Fundamental}
  {Role} in {Nonmonotonic} {Reasoning}, {Logic Programming}, and n-{Person}
  {Games}', {\em Artificial {Intelligence}}, {\bf 77}(2),  321--357, (1995).

\bibitem{Dung09}
Phan~M. Dung, `{Assumption-based argumentation}', in {\em {Argumentation in
  AI}}, pp. 25--44. Springer, (2009).

\bibitem{Dunne11}
Paul~E. Dunne, Anthony Hunter, Peter McBurney, Simon Parsons, and Michael
  Wooldridge, `{Weighted argument systems: Basic definitions, algorithms, and
  complexity results}', {\em {Artificial Intelligence}}, {\bf 175}(2),
  457--486, (2011).

\bibitem{Gabbay14}
Dov~M. Gabbay and Odinaldo Rodrigues, `{An equational approach to the merging
  of argumentation networks}', {\em {Journal of Logic and Computation}}, {\bf
  24}(6),  1253--1277, (2014).

\bibitem{Gorogiannis11}
Nikos Gorogiannis and Anthony Hunter, `{Instantiating abstract argumentation
  with classical logic arguments: Postulates and properties}', {\em {Artificial
  Intelligence}}, {\bf 175}(9-10),  1479--1497, (2011).

\bibitem{Leite11}
Jo{\~a}o Leite and Jo{\~a}o Martins, `{Social Abstract Argumentation}', in {\em
  {IJCAI}}, pp. 2287--2292, (2011).

\bibitem{Li12}
Hengfei Li, Nir Oren, and Timothy~J. Norman, `{Probabilistic Argumentation
  Frameworks}', in {\em {TAFA}}, pp. 1--16, (2012).

\bibitem{Nielsen06}
S{\o}ren~Holbech Nielsen and Simon Parsons, `A generalization of {Dung's}
  {Abstract} {Framework} for {Argumentation}', in {\em {ArgMAS}}, pp. 54--73,
  (2006).

\bibitem{Oikarinen11}
Emilia Oikarinen and Stefan Woltran, `{Characterizing Strong Equivalence for
  Argumentation Frameworks}', {\em {Artificial Intelligence}}, {\bf
  175}(14-15),  1985--2009, (2011).

\bibitem{Pereira07}
Lu{\'\i}s~Moniz Pereira and Alexandre~Miguel Pinto, `{Approved Models for
  Normal Logic Programs}', in {\em {LPAR}}, pp. 454--468, (2007).

\bibitem{Prakken04}
Henry Prakken, `{Analysing reasoning about evidence with formal models of
  argumentation}', {\em {Law, Probability \& Risk}}, {\bf 3}(1),  33--50,
  (2004).

\bibitem{Prakken10}
Henry Prakken, `{An abstract framework for argumentation with structured
  arguments}', {\em {Argument and Computation}}, {\bf 1},  93--124, (2010).

\bibitem{Riley12}
Luke Riley, Katie Atkinson, and Terry Payne, `{A Dialogue Game for Coalition
  Structure Generation with Self-Interested Agents}', in {\em {COMMA}}, pp.
  229--236, (2012).

\bibitem{Rotstein08}
Nicol{\'a}s Rotstein, Mart{\'\i}n~O. Moguillansky, Alejandro~J. Garc{\'\i}a,
  and Guillermo~R. Simari, `{An abstract Argumentation Framework for Handling
  Dynamics}', in {\em {Proceedings of the Argument, Dialogue and Decision
  Workshop in NMR 2008}}, pp. 131--139, (2008).

\end{thebibliography}
\end{document}